\documentclass[11pt]{article}
\usepackage[preprint]{acl}
\usepackage{times,latexsym}
\usepackage[T1]{fontenc}
\usepackage[utf8]{inputenc}
\usepackage{microtype,inconsolata}
\usepackage{xurl}

\usepackage{amsmath,amsfonts,bm}

\def\eqref#1{equation~\ref{#1}}

\def\1{\bm{1}}

\DeclareMathAlphabet{\mathsfit}{\encodingdefault}{\sfdefault}{m}{sl}
\SetMathAlphabet{\mathsfit}{bold}{\encodingdefault}{\sfdefault}{bx}{n}

\usepackage{hyperref,url,amsmath,amssymb,amsthm}
\usepackage{xcolor,booktabs,array,graphicx}
\usepackage{colortbl}
\definecolor{ourblue}{rgb}{0.368,0.507,0.71}
\definecolor{tablegroup}{RGB}{255,247,204}
\definecolor{tableroot}{RGB}{225,237,245}
\hypersetup{colorlinks=true,citecolor=ourblue,linkcolor=ourblue,urlcolor=ourblue}
\usepackage{placeins,flafter}
\newtheorem{theorem}{Theorem}

\newtheorem{definition}{Definition}

\newcolumntype{L}[1]{>{\raggedright\arraybackslash}p{#1}}
\title{Understanding Target Probability Reuse in Speculative Decoding}
\author{
  Jungseob Lee\textsuperscript{1} \quad Giyoon Jang\textsuperscript{2} \quad Chailin Nam\textsuperscript{2} \\
  \bfseries Jiwoo Kum\textsuperscript{2} \quad Sihyeon Lee\textsuperscript{2} \quad Kyojun Choo\textsuperscript{2} \\
  \bfseries Jihun Song\textsuperscript{2} \quad Eunseo Song\textsuperscript{2} \quad Chanjun Park\textsuperscript{2}\thanks{Corresponding author.} \\
  \normalfont\textsuperscript{1}Korea University \quad \textsuperscript{2}Soongsil University \\
  {\small\texttt{omanma1928@korea.ac.kr} \quad \texttt{jyjang1130@gmail.com} \quad \texttt{skacofls26@naver.com}} \\
  {\small\texttt{rmawldn1204@naver.com} \quad \texttt{sihyeon2988@gmail.com} \quad \texttt{cnrywns1234@naver.com}} \\
  {\small\texttt{songjh041207@gmail.com} \quad \texttt{eunseosong4916@gmail.com} \quad \texttt{chanjun.park@ssu.ac.kr}}
}
\begin{document}
\raggedbottom
\setcounter{footnote}{1} 
\maketitle
\begin{abstract}
Speculative decoding uses a target model to verify a tree of draft tokens in parallel. We study whether the resulting next-token distributions can help select the next tree. For unrestricted target models, we show that guaranteed exact reuse requires a distribution computed at the next round's complete context. We characterize when retaining this distribution requires sacrificing accepted tokens and establish conditions under which this loss persists across decoding rounds. Across six targets, we find that reranking first-token branches with a separately computed distribution at the current context improves expected acceptance over draft ranking, especially with small tree budgets. Drawing the first token from this distribution before allocating the tree further improves expected acceptance over pruning a preselected tree, even at equal mean tree size. In our experiments, both selection rules also decode faster than draft ranking when the separate target pass overlaps standalone drafting. Our findings show that making target probabilities available before tree allocation can improve the efficiency of speculative decoding. Our code is available at \url{https://github.com/js-lee-AI/TargetRows}.
\end{abstract}
\begin{figure}[t]
\centering
\includegraphics[width=\columnwidth]{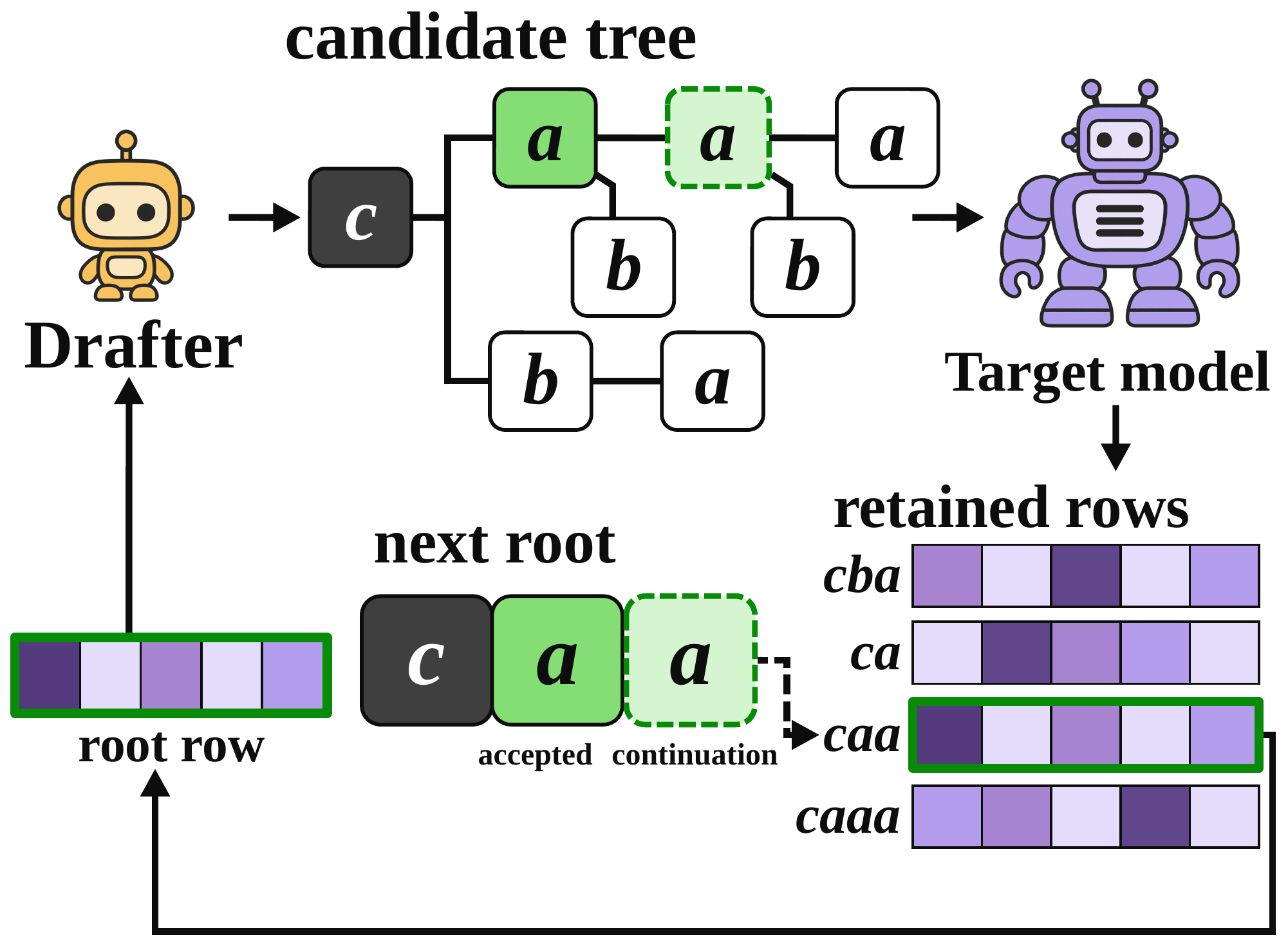}
\caption{Address matching across decoding rounds. The dashed link identifies the scored prefix $caa$ with the next root; its row is $p(\cdot\mid caa)$.}
\label{fig:roundmap}
\end{figure}

\section{Introduction}
\label{sec:intro}
Large language models typically generate text one token at a time \citep{lee2025clo,lee2026bytecross,lee2026acoer,lee2026answerconditioned,kang2026thinkfuse}. Speculative decoding accelerates this process by letting a cheap drafter propose tokens for the target model to verify in parallel \citep{leviathan2023fast,chen2023accelerating,lee2026redline}. Tree-based methods extend this approach by arranging several candidate continuations into branches and verifying them in a single target pass \citep{miao2024specinfer,cai2024medusa,li2024eagle2,li2025eagle3,lee2026cast}. To keep the tree within a fixed node budget, the drafter must choose which branches to retain. The target's probabilities could guide this choice, yet the tree is usually selected before verification reveals them. The previous pass offers another source of information because it has already computed next-token distributions at candidate prefixes. We call these distributions rows.

We ask when one of these rows can provide exact next-token probabilities for selecting the next tree without another target evaluation. We find that, for an unrestricted target, such reuse is guaranteed only when the row was computed at the next round's complete context, which we call its address. Two addresses may end in the same token even when they contain different sequences of preceding tokens. A cache indexed by the final token can still provide draft candidates \citep{tokenrecycling2025}, but a token match alone does not guarantee exact reuse at the next round's address.

A round has \emph{address coverage} when its verification pass has already computed a row at the next round's address. Figure~\ref{fig:roundmap} makes this condition concrete. Starting from context $c$, suppose verification accepts $a$ and then emits another $a$ as its continuation. The next round starts at $caa$, so it needs the row computed there. The row at $ca$ has a different address because it conditions on the context before the second $a$. Once retained, the matching row supplies exact first-token probabilities for selecting the next tree, provided the target computation is unchanged.

We find an exact trade-off between address coverage and accepted tokens when a pass scores only its candidate tree. The result holds for rounds that accept one tree path, emit one continuation token, and preserve the target distribution for each chosen tree. Coverage occurs exactly when the accepted path plus its continuation remains inside the tree. Exhaustive traversal instead keeps accepting tokens until a draw leaves the tree, then emits that draw as its continuation. The next root is therefore unscored. A covered round could have accepted its continuation as another draft token, so it accepts at least one fewer token than exhaustive traversal on the same target-token sequence.

Scoring prefixes one step beyond the candidate tree can cover the next root even after exhaustive traversal. Adding those same prefixes as candidates would instead let traversal accept one more token on each covered round.

We then compare coverage over whole runs. Within this verifier class, consider a decoder with one target pass in each round whose decisions use target information only along its emitted path. We construct a decoder that is never covered, scores no more prefixes and emits at least as many tokens after every round. Thus these decoders benefit from the next-root row by computing it in a separate target pass.

We evaluate the root row's value for tree selection separately from the cost of obtaining it. Our experiments use EAGLE-3 dynamic trees \citep{li2025eagle3} for six targets from 4B to 32B, seven task types and two decoding temperatures. Root selection ranks candidate paths using target probabilities for their first token and draft probabilities for later tokens. We find that it increases the expected number of accepted draft tokens over draft ranking, especially with small trees. Most gains occur when the models' most likely first tokens differ.

Root-draw selection instead draws the first token from the root row before allocating the tree beneath it. We find higher expected acceptance than pruning a preselected tree to that token's branch \citep{liu2024pearl}, even at equal mean tree size. Both selection rules also decode faster than draft ranking when standalone drafting on a second GPU overlaps the separate root pass.

\section{Related Work}
\label{sec:related}
\paragraph{Speculation and verification.}
Speculative sampling uses acceptance and correction steps to preserve the target distribution \citep{leviathan2023fast,chen2023accelerating}. Tree-based methods verify several candidate paths in parallel \citep{miao2024specinfer,cai2024medusa,li2024eagle2,li2025eagle3}. Related studies bound expected acceptance in draft trees \citep{speedoflight2025} and analyze how jointly verifying multiple tokens or drafts can improve acceptance while preserving the target distribution \citep{blockverify2024,optimalmdsd2025,univer2026}. We instead characterize when a verification pass has already computed a row at the next round's complete context and relate this availability to the number of accepted tokens.

\paragraph{Target information and reuse.}
LogitSpec predicts the token after the next one from last-token logits and uses it to retrieve drafts \citep{logitspec2025}. FlexDraft calibrates draft logits with the resolved continuation token \citep{flexdraft2026}, while Saguaro drafts ahead for the verification outcomes it predicts \citep{saguaro2026}. Token Recycling keeps the top candidates of every target row computed during verification under that row's input token and retrieves the next draft tree from this store \citep{tokenrecycling2025}. We evaluate full rows indexed by the final token as inputs to tree selection.

\paragraph{Drafting and compute allocation.}
Work on draft generation includes proposer training \citep{gto2025,vsd2026,lei2026draftopd,lee2026offpolicy}, parallel drafting \citep{hui2026peagle}, block drafting \citep{chen2026dflash,sandler2025specdiff2,liu2026dart,lee2026glance} and retrieval \citep{racer2026}. EAGLE-2, OPT-Tree, SMART and Sequoia adapt candidate trees to draft confidence or hardware cost \citep{li2024eagle2,opttree2024,smart2026,sequoia2024}. Beyond tree construction, agreement between two drafts can guide whether a reasoning model invokes extended thinking \citep{lee2026dart_routing}.

Pipelined methods overlap drafting with target verification \citep{liu2024pearl,pipespec2025}. For example, PEARL verifies the first draft token while the drafter proposes the rest, making the root row available during drafting. We isolate the value of target rows for tree selection by comparing rules on the same draft candidate pools.

\section{Setting and Reuse Criterion}
\label{sec:setting}
At round $t$, a target model $p$ and drafter $q$ share committed context $c_t$. The drafter selects a finite, prefix-closed candidate trie $T_t$ with at most $B$ non-root nodes. Each node represents the token sequence $u$ from the root, with full context $c_tu$. The root $\epsilon$ represents the empty sequence; siblings with the same token are merged.

One target pass scores a set $Q_t$ of prefixes, possibly outside $T_t$, and computes rows
\begin{equation}
 f_t(u)=p(\cdot\mid c_tu),\qquad u\in Q_t.
 \label{eq:field}
\end{equation}

Let $a_t$ be the accepted draft-token sequence and $b_t$ the single continuation token. The next committed context is $c_{t+1}=c_ta_tb_t$.

\begin{definition}[Exact address coverage]
\label{def:coverage}
A round is covered when $a_tb_t\in Q_t$. On this event, $f_t(a_tb_t)=p(\cdot\mid c_{t+1})$ is the next round's root row.
\end{definition}

Without coverage, the rows computed in this pass cannot guarantee exact recovery of the next root row for an unrestricted target. Two targets can agree on all these rows but differ at the next root.

For acceptance analysis, a \emph{path-aligned} verifier accepts only a root path, $a_t\in T_t$. \emph{Conditional exactness} requires the target law given the tree and prior history; marginal exactness requires it after averaging over tree selection.

\section{Coverage and Verification}
\label{sec:theory}
\subsection{Exhaustive traversal and early stopping}
\label{sec:route-byproduct}
For a fixed trie $T$, consider a target-distributed continuation $X_1,X_2,\ldots$ of context $c$. Let $L_T$ be the number of initial tokens that remain inside the trie. Its expectation is the sum of the target probabilities of the non-root paths,
\begin{equation}
 \begin{aligned}
 L_T&=\max\{d\geq0:X_{1:d}\in T\},\\
 \tau^*(T)&=\sum_{u\in T\setminus\{\epsilon\}}p(u\mid c).
 \end{aligned}
 \label{eq:ceiling}
\end{equation}
The path probability $p(u\mid c)$ is the product of target transitions along $u$. For greedy decoding, the target law is concentrated on the argmax path.

Exhaustive target sampling accepts target draws until one leaves the trie, then emits that draw as the continuation. It accepts $L_T$ tokens with expected acceptance $\tau^*(T)$ \citep{speedoflight2025}.

We compare a verifier with exhaustive traversal by extending its output with fresh target draws. When the verifier is conditionally exact given $T$, the resulting stream follows the target law. If it is also path-aligned, its $A$ accepted tokens form an initial path in $T$, so $A\leq L_T$; its continuation is $X_{A+1}$.

\begin{theorem}[Coverage and unused path length]
\label{thm:characterization}
Suppose the verifier is path-aligned and conditionally exact given $T$, exactly one continuation is emitted, and the scored set is the candidate trie, $Q=T$. Then coverage $C=\{X_{1:A+1}\in Q\}$ satisfies
\begin{equation}
 \begin{gathered}
 C=\{A<L_T\},\\
 \tau^*(T)-\mathbb E[A]
 =\mathbb E\!\left[(L_T-A)\mathbf1_C\right].
 \end{gathered}
 \label{eq:coverage-gap}
\end{equation}
Consequently, exhaustive target sampling has zero coverage. Positive coverage is equivalent to strictly smaller expected acceptance within this class.
\end{theorem}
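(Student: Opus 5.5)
The argument works pathwise on the coupled stream $X_1,X_2,\ldots$ set up just before the statement. Conditional exactness given $T$ lets us extend the verifier's output by fresh target draws into a target-distributed continuation. In that stream the accepted tokens are $X_{1:A}$ and the continuation is $X_{A+1}$. Path-alignment gives $X_{1:A}\in T$. Since $T$ is prefix-closed, every prefix of $X_{1:A}$ lies in $T$, so $A\le L_T$ by the definition of $L_T$ in \eqref{eq:ceiling}. All claims will then follow from comparing $A$ with $L_T$ on each realization.

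\textbf{Step 1 (event identity).} With $Q=T$, coverage is the event $X_{1:A+1}\in T$. If $A<L_T$, then $A+1\le L_T$ and $X_{1:L_T}\in T$. Prefix-closure then gives $X_{1:A+1}\in T$, so the round is covered. Conversely, suppose $X_{1:A+1}\in T$. Then $A+1$ belongs to the set whose maximum defines $L_T$, so $L_T\ge A+1>A$. Together these show $C=\{A<L_T\}$.

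\textbf{Step 2 (gap identity).} Since $A\le L_T$ always, $L_T-A\ge 0$, and it vanishes exactly off $C$ by Step 1. Hence $L_T-A=(L_T-A)\mathbf 1_C$ pointwise. Take expectations and use $\mathbb E[L_T]=\tau^*(T)$ from \eqref{eq:ceiling}, which is legitimate because the extended stream is target-distributed. For integrability, $L_T$ is bounded by the depth of the finite trie, so every expectation is finite.

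\textbf{Step 3 (consequences).} Exhaustive target sampling accepts exactly $A=L_T$, so $C=\{L_T<L_T\}=\emptyset$ and its coverage is zero. For the equivalence, the integrand $(L_T-A)\mathbf 1_C$ is nonnegative and is at least $1$ on $C$, since $L_T-A$ is integer-valued there and $A<L_T$. Therefore
\[
\Pr(C)\le\tau^*(T)-\mathbb E[A]\le(\text{depth of }T)\,\Pr(C).
\]
The left inequality shows that positive coverage gives strictly smaller expected acceptance. The right inequality shows that $\Pr(C)=0$ forces $\mathbb E[A]=\tau^*(T)$.

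\textbf{Main obstacle.} The delicate point is the coupling. We must justify that the verifier's accepted tokens and continuation are literally the first $A+1$ coordinates of a single target-distributed stream, and that $\mathbb E[L_T]=\tau^*(T)$ is evaluated on that same stream. The plan is to define $X_{1:A+1}$ as the verifier's emitted tokens and append fresh target draws conditionally on the history. Conditional exactness, meaning the emitted tokens have the target law given $T$, then yields the target law for the whole sequence by the chain rule. This places $A$ and $L_T$ on one probability space, and after that every step above is pathwise.
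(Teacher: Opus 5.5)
Your proposal is correct and matches the paper's proof essentially step for step: the same coupling by appending fresh target draws, $A\le L_T$ from path alignment, $C=\{A<L_T\}$ from prefix closure, and the gap identity from $\mathbb E[L_T]=\tau^*(T)$. The only differences are cosmetic: the paper derives $\mathbb E[L_T]=\tau^*(T)$ explicitly by the tail-sum identity instead of citing it, and it settles the equivalence by the integrality of $L_T-A$ alone rather than through your two-sided bound involving the trie depth.
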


On the same target stream, coverage means that the accepted path followed by the continuation remains in $T$. Exhaustive traversal would therefore accept at least one more token. Without coverage, both verifiers accept $L_T$ tokens. We thus obtain a lower bound on the expected acceptance gap:
\begin{equation}
 \mathbb E[L_T-A]\geq\Pr(C).
 \label{eq:coverage-cost}
\end{equation}
The full coupling proof is given in Appendix~\ref{app:identity}.

\paragraph{Scoring beyond the trie.}
We find that scoring outside $T$ can cover the next root even with exhaustive traversal. Let $F$ denote the scored frontier: scored prefixes outside $T$ whose parents lie in $T$. Coverage occurs exactly when the first exit prefix lies in $F$, giving
\begin{equation}
 \begin{aligned}
 \Pr(C)&=\sum_{u\in F}p(u\mid c)\\
 &=\tau^*(T\cup F)-\tau^*(T).
 \end{aligned}
 \label{eq:frontier}
\end{equation}
On the same target stream, exhaustive traversal of $T\cup F$ accepts one more token exactly on the covered rounds. Each such prefix could therefore extend the accepted path as a candidate. More generally, for any scored trie $Q\supseteq T$, the same exactness and stopping conditions give $\tau^*(Q)-\mathbb E[A]\geq\Pr(C)$. Proofs are given in Appendix~\ref{app:frontier-proof} and finite examples in Appendix~\ref{app:stopping}.

\paragraph{Rejection-based verifiers.}
We find that speculative sampling \citep{leviathan2023fast,chen2023accelerating} and its multi-step form for token trees \citep{miao2024specinfer} never cover the next root when $Q=T$. Their rejection rule suffices without conditional exactness given the tree. They test a node's children in turn; rejected tokens retain zero probability in every later residual distribution. The continuation therefore leaves the trie after all children are rejected, or at a leaf. Point-mass proposals, as in EAGLE-3, give exhaustive target sampling. Appendix~\ref{app:rejection} gives the residual argument.

\paragraph{Covering over a run.}
A covered round hands the next round its root row, so Equation~\ref{eq:coverage-cost} alone does not settle whether covering pays off over a run.
\begin{theorem}[Covering over a run]
\label{thm:run}
Let a decoder's only target computation after a shared prefill be one pass in each round $t$ over a trie $Q_t\supseteq T_t$, and let its verifier be path-aligned, emit one continuation and be conditionally exact given the selection inputs.

Suppose the decoder chooses $T_t$, $Q_t$ and where each round stops only from its emitted and drawn tokens and from the rows and other target outputs on its emitted path. Its only other inputs are its drafter, independent randomness and a first token drawn from its held root row.

Then some decoder preserves the target law, is never covered and scores no more prefixes per round. On a common target stream, it has emitted no fewer tokens than the original after each round, and more after the original's covered rounds.
\end{theorem}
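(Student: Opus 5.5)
We prove Theorem~\ref{thm:run} by a simulation (coupling) argument. From the original decoder $D$ we build a decoder $D'$ that runs a copy of $D$ internally and feeds it exactly the information $D$ would see, while $D'$ itself always uses exhaustive target sampling on the trie it verifies. Fix a common target stream $X_1,X_2,\ldots$ after the shared prefill, together with common drafter randomness. In each round, $D'$ chooses $T'_t=Q'_t=Q_t$, the scored trie of the simulated $D$. It then traverses exhaustively, accepting stream tokens while they stay in $Q_t$ and emitting the first exit draw as its continuation. By Theorem~\ref{thm:characterization} applied to $Q'=T'$, this round is never covered. It scores exactly $|Q_t|$ prefixes, so it scores no more than $D$. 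Because exhaustive traversal is conditionally exact given the trie, $D'$ preserves the target law.

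The key step is to show that $D'$ can keep simulating $D$ without extra target passes. We maintain the invariant that after each round, $D'$ has emitted a prefix of the stream that extends $D$'s emitted prefix, and that $D'$ holds every row $D$ will ever need on that prefix. The statement restricts $D$'s choices to its emitted and drawn tokens, the rows on its emitted path, drafter outputs, independent randomness and a root draw. All of these are functions of the stream prefix and of rows $p(\cdot\mid\text{prefix})$ along it.

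In a round where $D$ is not covered, $D$ accepts $A\le L_{T_t}\le L_{Q_t}$ tokens along the stream, by the coupling remark before Theorem~\ref{thm:characterization}. Meanwhile $D'$ accepts $L_{Q_t}$ tokens. Since the path stays in $Q_t$, every row on it was scored in that pass.

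In a round where $D$ is covered, we have $X_{1:A+1}\in Q_t$, so $L_{Q_t}\ge A+1$. Hence $D'$ emits strictly more tokens than $D$, which gives the strict gain claimed after covered rounds. The root row that $D$ now holds for free is $f_t(X_{1:A+1})$, and $D'$ has also computed it, because that prefix lies in $Q_t$. The root draw that $D$ makes from this row is simply the next stream token $X_{A+2}$ under the coupling.

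When $D'$ is ahead of $D$, it simulates $D$'s subsequent rounds lazily. It reads off $D$'s tokens from its own already-emitted stream and, inside each pass of $D'$, includes $D$'s requested trie. If $D$'s rounds cannot be replayed this way, $D'$ instead scores exactly $D$'s trie $Q_{t}$, rooted at $D$'s context, whenever $D'$ is behind or level with $D$.

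The main obstacle is this bookkeeping when the two decoders' contexts diverge. If $D'$ is ahead, $D$'s next trie is rooted at a shorter context than $D'$'s root. I would handle this by letting $D'$ run $D$'s round structure verbatim: it uses pass $t$ on the trie $Q_t$ at $D$'s root $c_t$, and merely refuses to stop early. It accepts further stream tokens within $Q_t$, but commits only for counting purposes, treating the extra tokens as already emitted.

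Precisely, define $D'$ to have the same root contexts as $D$ plus the surplus. I would then verify by induction that $D'$'s committed length is $\max$ of $D$'s committed length and the running exhaustive frontier, so it is monotonically no smaller. The induction also has to show that $D'$'s emitted output is still a target-law stream, meaning that accepted extra tokens are never reused in a way that double-counts them. Conditional exactness given the selection inputs, which depend only on the stream and on rows along it, is what makes this consistent.
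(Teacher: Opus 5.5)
Your plan breaks down at the step you flag as the main obstacle, and neither version you offer repairs it. Write $n_t,n'_t$ for the two emitted lengths after round $t$, $u_t=X_{n_t+1:n'_t}$ for the surplus of $D'$, and $L$ for the largest $d$ with $X_{n_t+1:n_t+d}\in Q_{t+1}$. Once $u_t\neq\epsilon$, which happens after every covered round of $D$, the two versions fail as follows.

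Your first version takes $T'=Q'=Q_{t+1}$ at the context of $D'$. It then scores the addresses $cX_{1:n'_t}v$, not the addresses $cX_{1:n_t}v$ that $D$ scores. So neither the comparison with $L$ nor the claim that $D'$ holds the rows of $D$ follows.

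Your verbatim version scores $Q_{t+1}$ at the root of $D$, and it has two defects. First, it scores prefixes $cX_{1:n_t}v$ with $v$ branching off $u_t$, that is, prefixes that branch inside the committed context of $D'$. Such a pass does not compute rows $p(\cdot\mid c'u)$ at the decoder's own context $c'$. The paper permits this only in its off-path extension, under a permissive cache attention mask. Counting the surplus as emitted ``only for counting purposes'' does not avoid this, because emitted tokens are the committed context.

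Second, and decisively, suppose $|u_t|>L$, for instance when the next trie of $D$ is just its root while $D'$ is already ahead. Then the root $cX_{1:n'_t}$ of $D'$ is not scored. No earlier pass reached that address either, since each uncovered round scores only stream positions before the continuation it emits. So $D'$ cannot draw a continuation in that round. Your recursion (``max of $D$'s length and the exhaustive frontier'') silently lets such rounds emit nothing. That breaks the one-continuation structure on which coverage and Theorem~\ref{thm:characterization} rest.

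The missing idea is to translate the scored trie of $D$ to the context of $D'$, keeping only what extends the surplus:
$Q'_{t+1}=\{\epsilon\}\cup\{w\neq\epsilon:u_tw\in Q_{t+1}\}$.
This trie is rooted at the context of $D'$. It always contains that root and contains only prefixes extending the output of $D'$. The map $w\mapsto u_tw$ injects its nonempty elements into $Q_{t+1}\setminus\{\epsilon\}$, so $|Q'_{t+1}|\le|Q_{t+1}|$. Exhaustive traversal of it gives $n'_{t+1}=\max(n'_t,n_t+L)+1\ge n_t+L+1\ge n_{t+1}$. The inequality is strict after covered rounds, where $n_{t+1}\le n_t+L$.

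You also still owe the availability invariant, which you assert but never prove. $D'$ has scored every $X_{1:j}$ with $j<n'_t$, since each emitted token was drawn from a scored row. Any stream prefix $X_{1:j}$ that $D$ scored in round $s$ satisfies $j\le n_{s-1}+L_s<n'_s$. Together with $n_{t+1}\le n'_{t+1}$, this lets $D'$ replay the tries and stopping decisions of $D$ from its own past. That is exactly what the target-law argument for $D'$ requires.
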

We construct the comparator on the same target stream with shared independent randomness. It reconstructs the original tries but scores only its root and their prefixes extending its emitted context. Exhaustive traversal takes it past every stream prefix scored by the original. The comparator has scored these prefixes and drawn their next tokens. It thus holds the rows, hidden states and draws needed to reproduce the original's choices.

We extend the comparison to decoders using off-path target outputs. Obtaining them may require scoring branches diverging within the comparator's emitted context. The extension applies when the cache attention mask permits such scoring. Appendix~\ref{app:run} gives the construction and proof.

\subsection{Adaptive selection preserves exactness}
\label{sec:assembled}
Retained rows can guide tree selection without changing the target distribution. Let $\mathcal H_t$ contain the emitted history, retained target rows and drafter information after round $t$. The next tree may depend on $\mathcal H_t$, provided verification is conditionally exact given $\mathcal H_t$ and that tree. Each emitted token then follows the target conditional at its actual prefix. Multiplying these conditional probabilities gives the target joint distribution for any fixed number of output tokens.

Retained rows from other addresses can therefore guide tree selection, provided selection does not depend on the verifier's future random draws. The verifier must still preserve the target conditional at each emitted prefix. Directly sampling from a row at another address can change the output distribution, as illustrated in Appendix~\ref{app:wrong-branch}.

\section{Experiments}
\label{sec:results}
\begin{table*}[t]
\centering\fontsize{9}{9.5}\selectfont
\begin{tabular*}{\textwidth}{@{\extracolsep{\fill}}lrrrrrrrr@{}}
\toprule
& Exhaustive & \multicolumn{2}{c}{Cap 1} & \multicolumn{2}{c}{Sampled-leaf stop} & \multicolumn{3}{c}{Scored top $M$, $\Pr(C)$} \\
\cmidrule(lr){2-2}\cmidrule(lr){3-4}\cmidrule(lr){5-6}\cmidrule(lr){7-9}
Target & $\mathbb E[A]$ & $\Pr(C)$ & $\mathbb E[L_T-A]$ & $\Pr(C)$ & $\mathbb E[L_T-A]$ & $M=100$ & $M=200$ & $M=710$ \\
\midrule
\rowcolor{tablegroup}\multicolumn{9}{c}{Greedy decoding} \\
Llama-3.1-8B & 4.509 & 0.817 & 3.599 & 0.466 & 0.466 & 0.158 & 0.275 & 0.328 \\
Qwen3-4B & 2.340 & 0.701 & 1.452 & 0.241 & 0.241 & 0.128 & 0.261 & 0.345 \\
Qwen3-8B & 2.221 & 0.587 & 1.401 & 0.239 & 0.239 & 0.115 & 0.226 & 0.304 \\
Qwen2.5-7B & 3.293 & 0.686 & 2.459 & 0.293 & 0.293 & 0.123 & 0.225 & 0.285 \\
Qwen2.5-14B & 3.486 & 0.724 & 2.631 & 0.321 & 0.321 & 0.139 & 0.249 & 0.302 \\
Qwen2.5-32B & 3.489 & 0.720 & 2.633 & 0.316 & 0.316 & 0.133 & 0.240 & 0.297 \\
\rowcolor{tablegroup}\multicolumn{9}{c}{Sampling at temperature $1$} \\
Llama-3.1-8B & 3.528 & 0.698 & 2.695 & 0.363 & 0.363 & 0.143 & 0.261 & 0.321 \\
Qwen3-4B & 2.253 & 0.676 & 1.377 & 0.234 & 0.234 & 0.126 & 0.257 & 0.339 \\
Qwen3-8B & 2.082 & 0.553 & 1.286 & 0.223 & 0.223 & 0.112 & 0.219 & 0.295 \\
Qwen2.5-7B & 2.746 & 0.608 & 1.959 & 0.244 & 0.244 & 0.116 & 0.220 & 0.284 \\
Qwen2.5-14B & 2.895 & 0.645 & 2.075 & 0.267 & 0.267 & 0.128 & 0.239 & 0.303 \\
Qwen2.5-32B & 2.921 & 0.649 & 2.098 & 0.261 & 0.261 & 0.126 & 0.236 & 0.301 \\
\bottomrule
\end{tabular*}
\caption{Coverage, expected acceptance and shortfall on EAGLE-3's $59$-node trees, computed as expectations from the same recorded target rows for every rule. Scored sets keep exhaustive acceptance on the tree and add the next $M-59$ candidates in draft order.}
\label{tab:coverage}
\end{table*}

\subsection{Setup}
\label{sec:setup}
\paragraph{Targets and prompts.}
The six targets are Llama-3.1-8B-Instruct \citep{grattafiori2024llama3}, Qwen3-4B and Qwen3-8B \citep{qwen3technical2025}, and Qwen2.5-7B, 14B and 32B-Instruct \citep{qwen25technical2024}. Each uses a public EAGLE-3 drafter trained for it \citep{li2025eagle3,cen2026angelslim,pan2025fail} and its own chat template, with Qwen3 in non-thinking mode. We evaluate $164$ HumanEval problems \citep{chen2021evaluating} and $480$ Spec-Bench prompts \citep{xia2024unlocking}, including MT-bench \citep{zheng2023judging}. These cover seven task types. Each target decodes the $644$ prompts under a default limit of $256$ new tokens, with greedy decoding and untruncated sampling at temperature $1$.

\paragraph{Trees and verification.}
The EAGLE-3 drafter grows $710$ candidates at depths one to eight. Draft selection keeps the top $59$ non-root candidates by cumulative draft score, forming EAGLE-3's default $60$-token tree including the root. Verification uses exhaustive target sampling, which matches EAGLE-3's joint law for accepted tokens and the continuation.

To evaluate alternative trees, the pass also scores expanded nodes and the top $160$ candidates. Every candidate's parent is scored, supplying target probabilities for every pool edge. The verified tree's rows remain unchanged up to rounding. Checkpoints, tasks, hardware and the measurement protocol are detailed in Appendix~\ref{app:setup}.

\begin{table*}[t]
\centering\fontsize{9}{9.5}\selectfont
\setlength{\tabcolsep}{4pt}
\begin{tabular*}{\textwidth}{@{\extracolsep{\fill}}lrrrrrrrrr>{\columncolor{tableroot}[4pt][0pt]}r@{}}
\toprule
& \multicolumn{5}{c}{$B=8$} & \multicolumn{5}{c}{$B=59$} \\
\cmidrule(lr){2-6}\cmidrule(lr){7-11}
& \multicolumn{3}{c}{Expected acceptance} & \multicolumn{2}{c}{Root-draw gain, \%} & \multicolumn{3}{c}{Expected acceptance} & \multicolumn{2}{c}{Root-draw gain, \%} \\
\cmidrule(lr){2-4}\cmidrule(lr){5-6}\cmidrule(lr){7-9}\cmidrule(lr){10-11}
Target & Draft & \cellcolor{tableroot}Root & Oracle & \cellcolor{tableroot}Same cap & \cellcolor{tableroot}Equal mean & Draft & \cellcolor{tableroot}Root & Oracle & \cellcolor{tableroot}Same cap & Equal mean \\
\midrule
\rowcolor{tablegroup}\multicolumn{11}{c}{Greedy decoding} \\
Llama-3.1-8B & 3.186 & \cellcolor{tableroot}3.568 & 4.899 & \cellcolor{tableroot}10.0 & \cellcolor{tableroot}2.9 & 4.509 & \cellcolor{tableroot}4.676 & 4.899 & \cellcolor{tableroot}3.5 & 0.9 \\
Qwen3-4B & 1.664 & \cellcolor{tableroot}2.002 & 2.720 & \cellcolor{tableroot}12.8 & \cellcolor{tableroot}4.5 & 2.340 & \cellcolor{tableroot}2.516 & 2.720 & \cellcolor{tableroot}6.7 & 2.4 \\
Qwen3-8B & 1.556 & \cellcolor{tableroot}1.919 & 2.564 & \cellcolor{tableroot}13.8 & \cellcolor{tableroot}4.4 & 2.221 & \cellcolor{tableroot}2.399 & 2.564 & \cellcolor{tableroot}6.7 & 2.2 \\
Qwen2.5-7B & 2.359 & \cellcolor{tableroot}2.710 & 3.636 & \cellcolor{tableroot}10.7 & \cellcolor{tableroot}4.1 & 3.293 & \cellcolor{tableroot}3.450 & 3.636 & \cellcolor{tableroot}4.2 & 1.3 \\
Qwen2.5-14B & 2.435 & \cellcolor{tableroot}2.790 & 3.846 & \cellcolor{tableroot}11.1 & \cellcolor{tableroot}4.0 & 3.486 & \cellcolor{tableroot}3.645 & 3.846 & \cellcolor{tableroot}4.2 & 1.2 \\
Qwen2.5-32B & 2.464 & \cellcolor{tableroot}2.812 & 3.843 & \cellcolor{tableroot}10.6 & \cellcolor{tableroot}3.9 & 3.489 & \cellcolor{tableroot}3.646 & 3.843 & \cellcolor{tableroot}4.1 & 1.2 \\
\rowcolor{tablegroup}\multicolumn{11}{c}{Sampling at temperature $1$} \\
Llama-3.1-8B & 2.444 & \cellcolor{tableroot}2.655 & 3.225 & \cellcolor{tableroot}12.2 & \cellcolor{tableroot}3.1 & 3.528 & \cellcolor{tableroot}3.634 & 3.882 & \cellcolor{tableroot}4.7 & 1.4 \\
Qwen3-4B & 1.600 & \cellcolor{tableroot}1.895 & 2.595 & \cellcolor{tableroot}13.1 & \cellcolor{tableroot}4.7 & 2.253 & \cellcolor{tableroot}2.411 & 2.627 & \cellcolor{tableroot}6.8 & 2.4 \\
Qwen3-8B & 1.449 & \cellcolor{tableroot}1.762 & 2.356 & \cellcolor{tableroot}14.2 & \cellcolor{tableroot}4.8 & 2.082 & \cellcolor{tableroot}2.241 & 2.416 & \cellcolor{tableroot}7.0 & 2.3 \\
Qwen2.5-7B & 1.922 & \cellcolor{tableroot}2.161 & 2.704 & \cellcolor{tableroot}12.6 & \cellcolor{tableroot}4.8 & 2.746 & \cellcolor{tableroot}2.866 & 3.079 & \cellcolor{tableroot}5.2 & 1.8 \\
Qwen2.5-14B & 1.996 & \cellcolor{tableroot}2.221 & 2.785 & \cellcolor{tableroot}13.0 & \cellcolor{tableroot}4.7 & 2.895 & \cellcolor{tableroot}3.010 & 3.237 & \cellcolor{tableroot}5.3 & 1.8 \\
Qwen2.5-32B & 2.039 & \cellcolor{tableroot}2.256 & 2.816 & \cellcolor{tableroot}12.2 & \cellcolor{tableroot}4.5 & 2.921 & \cellcolor{tableroot}3.033 & 3.263 & \cellcolor{tableroot}5.2 & 1.8 \\
\bottomrule
\end{tabular*}
\caption{Expected accepted draft tokens and root-draw gains over the PEARL-style decoder at shared contexts. Same cap uses the stated $B$; Equal mean enlarges PEARL-style preselection to match the mean verified node count.}
\label{tab:selection}
\end{table*}

\begin{figure*}[t]
\centering
\includegraphics[width=\textwidth]{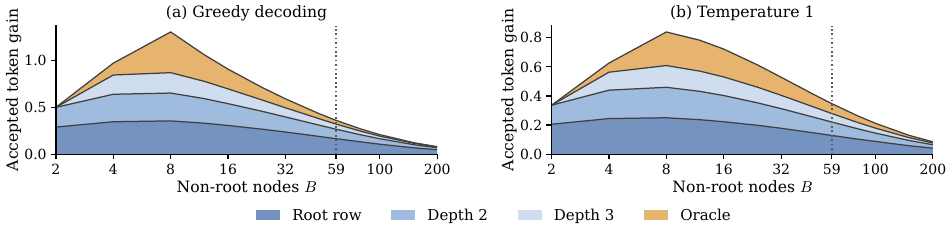}
\caption{Gain over draft selection when exact target rows replace draft factors through increasing depths, averaged over six targets. The upper edge of each band is one selection rule. The dotted line marks EAGLE-3's tree.}
\label{fig:depth}
\end{figure*}

\begin{figure*}[t]
\centering
\includegraphics[width=\textwidth]{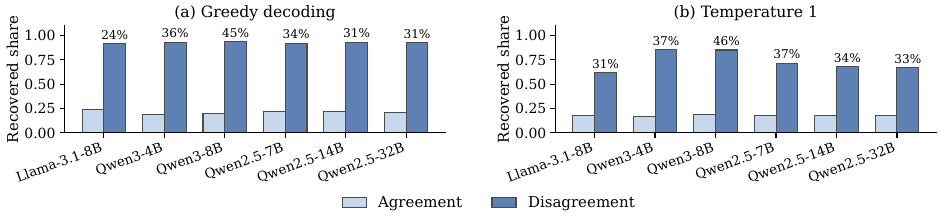}
\caption{Share of the oracle advantage at $B=59$ recovered by root selection, split by whether the drafter's most probable first token is the target's most probable token. Labels give the share of rounds in which the two differ.}
\label{fig:agreement}
\end{figure*}

\paragraph{Selection values.}
We compare rules at the same recorded contexts and candidate pools. For trees chosen before the first draw, each rule keeps its $B$ highest-ranked non-root candidates, breaking ties by depth and then draft rank. This set $S$ contains their non-root ancestors. Its expected number of accepted draft tokens under exhaustive traversal is
\begin{equation}
 V(S)=\sum_{u\in S}p(u\mid c).
 \label{eq:selection-value}
\end{equation}
Under greedy decoding, $p$ places all mass on the target argmax at each prefix, in both $V$ and scores using $p$. Thus $V$ counts selected nodes on the greedy path.

For $u=(x_1,\ldots,x_d)$, root selection replaces only the first draft factor with the exact root probability,
\begin{equation}
 w_{\mathrm{root}}(u)=p(x_1\mid c)\prod_{i=2}^{d}q_i,
 \label{eq:root-score}
\end{equation}
where $q_i$ is the drafter's probability of $x_i$. Depth-two selection also uses the exact second factor. The oracle ranks by $p(u\mid c)$ and maximizes $V$ among trees fixed before verification \citep{opttree2024,sequoia2024}.

Root-draw selection draws the first token from the root row, then selects it and up to $B-1$ descendants in draft order. In contrast, a PEARL-style decoder \citep{liu2024pearl} prunes draft selection's preselected tree after the same draw. It keeps the drawn token's subtree, or that token alone if absent from the tree's first level. If the pool's first level lacks the token, both verify it alone.

Both decoders accept the first draw; their expected acceptance includes that token and averages over the root draw. Under greedy decoding, root-draw and root selection have equal expected acceptance when the pool's first level contains the target argmax. Reported $95\%$ percentile intervals use $2{,}000$ paired prompt-bootstrap resamples.

\subsection{Coverage on EAGLE-3 trees}
\label{sec:finitechecks}

Table~\ref{tab:coverage} compares stopping rules and scored sets. We find no coverage under exhaustive traversal with only the tree scored. Cap~1 accepts at most one draft token; sampled-leaf stopping emits a draw reaching a leaf as its continuation, accepting one fewer token than exhaustive traversal on covered rounds.

Extra scoring preserves the original tree's exhaustive acceptance. Coverage requires a scored prefix just outside it, which could instead extend the accepted path by one token.

\begin{table}[t]
\centering\fontsize{9}{9.5}\selectfont
\setlength{\tabcolsep}{2pt}
\begin{tabular*}{\columnwidth}{@{\extracolsep{\fill}}lrrrr@{}}
\toprule
& \multicolumn{2}{c}{Greedy decoding} & \multicolumn{2}{c}{Temperature $1$} \\
\cmidrule(lr){2-3}\cmidrule(lr){4-5}
Target & \cellcolor{tableroot}Gain, \% & Nodes & \cellcolor{tableroot}Gain, \% & Nodes \\
\midrule
Llama-3.1-8B & \cellcolor{tableroot}3.3 [2.6, 4.1] & 53 / 39 & \cellcolor{tableroot}4.1 [1.7, 6.8] & 48 / 34 \\
Qwen3-4B & \cellcolor{tableroot}6.7 [6.2, 7.3] & 49 / 33 & \cellcolor{tableroot}7.1 [6.4, 7.9] & 48 / 32 \\
Qwen3-8B & \cellcolor{tableroot}7.3 [6.6, 8.0] & 44 / 28 & \cellcolor{tableroot}7.3 [6.3, 8.3] & 43 / 27 \\
Qwen2.5-7B & \cellcolor{tableroot}4.5 [3.6, 5.5] & 49 / 35 & \cellcolor{tableroot}5.7 [4.1, 7.4] & 46 / 32 \\
Qwen2.5-14B & \cellcolor{tableroot}3.8 [3.1, 4.4] & 49 / 36 & \cellcolor{tableroot}4.5 [3.0, 5.9] & 47 / 32 \\
Qwen2.5-32B & \cellcolor{tableroot}3.7 [3.0, 4.3] & 50 / 36 & \cellcolor{tableroot}4.7 [3.4, 6.1] & 47 / 33 \\
\bottomrule
\end{tabular*}
\caption{Online decoding at $B=59$ with a separate root pass. Gain is root-draw selection's gain in accepted draft tokens over the PEARL-style decoder, with $95\%$ intervals. Nodes gives mean non-root counts as root draw / PEARL-style at each decoder's own contexts.}
\label{tab:online-main}
\end{table}

\begin{table*}[t]
\centering\fontsize{9}{9.5}\selectfont
\setlength{\tabcolsep}{4pt}
\begin{tabular*}{\textwidth}{@{\extracolsep{\fill}}lrrrrrrr>{\columncolor{tableroot}[4pt][0pt]}r@{}}
\toprule
& \multicolumn{4}{c}{$B=8$} & \multicolumn{4}{c}{$B=59$} \\
\cmidrule(lr){2-5}\cmidrule(lr){6-9}
Target & Draft & Base & Rows & \cellcolor{tableroot}Root & Draft & Base & Rows & Root \\
\midrule
\rowcolor{tablegroup}\multicolumn{9}{c}{Greedy decoding} \\
Llama-3.1-8B & 3.186 & 3.187 & 3.234 & \cellcolor{tableroot}3.568 & 4.509 & 4.509 & 4.536 & 4.676 \\
Qwen3-4B & 1.664 & 1.674 & 1.722 & \cellcolor{tableroot}2.002 & 2.340 & 2.343 & 2.372 & 2.516 \\
Qwen3-8B & 1.556 & 1.581 & 1.650 & \cellcolor{tableroot}1.919 & 2.221 & 2.241 & 2.272 & 2.399 \\
Qwen2.5-7B & 2.359 & 2.360 & 2.399 & \cellcolor{tableroot}2.710 & 3.293 & 3.294 & 3.311 & 3.450 \\
Qwen2.5-14B & 2.435 & 2.436 & 2.475 & \cellcolor{tableroot}2.790 & 3.486 & 3.482 & 3.502 & 3.645 \\
Qwen2.5-32B & 2.464 & 2.458 & 2.490 & \cellcolor{tableroot}2.812 & 3.489 & 3.488 & 3.502 & 3.646 \\
\rowcolor{tablegroup}\multicolumn{9}{c}{Sampling at temperature $1$} \\
Llama-3.1-8B & 2.444 & 2.451 & 2.482 & \cellcolor{tableroot}2.655 & 3.528 & 3.527 & 3.544 & 3.634 \\
Qwen3-4B & 1.600 & 1.613 & 1.656 & \cellcolor{tableroot}1.895 & 2.253 & 2.257 & 2.283 & 2.411 \\
Qwen3-8B & 1.449 & 1.475 & 1.537 & \cellcolor{tableroot}1.762 & 2.082 & 2.105 & 2.133 & 2.241 \\
Qwen2.5-7B & 1.922 & 1.931 & 1.958 & \cellcolor{tableroot}2.161 & 2.746 & 2.755 & 2.768 & 2.866 \\
Qwen2.5-14B & 1.996 & 2.002 & 2.025 & \cellcolor{tableroot}2.221 & 2.895 & 2.901 & 2.914 & 3.010 \\
Qwen2.5-32B & 2.039 & 2.045 & 2.066 & \cellcolor{tableroot}2.256 & 2.921 & 2.924 & 2.937 & 3.033 \\
\bottomrule
\end{tabular*}
\caption{Expected accepted draft tokens at shared contexts for draft selection, cross-fitted rerankers and root selection. Base uses draft and contextual features; Rows adds features from retained rows at other addresses.}
\label{tab:rerank-main}
\end{table*}

\begin{table}[t]
\centering\footnotesize
\setlength{\tabcolsep}{2.5pt}
\begin{tabular*}{\columnwidth}{@{\extracolsep{\fill}}llrr@{}}
\toprule
Target & Decoding & $B=8$ & $B=59$ \\
\midrule
Llama-3.1-8B & Greedy & 8.5 [7.3, 9.8] & 2.7 [2.4, 3.1] \\
Qwen3-8B & Greedy & 22.6 [20.9, 24.1] & 7.6 [7.2, 8.1] \\
\midrule
Llama-3.1-8B & Temp. $1$ & 7.6 [6.9, 8.4] & 2.8 [2.6, 3.1] \\
Qwen3-8B & Temp. $1$ & 21.4 [20.4, 22.5] & 7.3 [7.0, 7.6] \\
\bottomrule
\end{tabular*}
\caption{Root selection's expected-acceptance gain over draft ranking under a $1{,}024$-token limit, in percent, for rounds starting once the default output limit is reached. Brackets give $95\%$ prompt-bootstrap intervals.}
\label{tab:long-main}
\end{table}

\subsection{Rows at matched addresses}
\label{sec:rows}

Additional scoring provides rows at some complete emitted addresses. We find that these rows closely match the next root row: mean total variation is below $0.01$ in every run. By contrast, rows sharing only the final token have mean total variation between $0.38$ and $0.47$. The remaining variation for address-matched rows is comparable to evaluating the same row in batches of different sizes.

Frequent token matches in the previous tree therefore do not guarantee an exact row for the next root. Address counts and row agreement are detailed in Appendix~\ref{app:matches}.

\subsection{Value of exact rows for tree selection}
\label{sec:selection}

Table~\ref{tab:selection} compares selection rules at shared contexts. We find that root selection increases expected acceptance over draft ranking by up to $8.0\%$ at the default budget of $59$ nodes, with larger gains at $8$ nodes. Under greedy decoding, root selection cannot shorten the accepted path relative to draft selection.

Figure~\ref{fig:depth} compares gains over draft ranking from exact target probabilities at increasing depths. We recover a substantial share of the oracle's gain with the root row alone, while exact second-token probabilities further improve allocation within first-token branches. Our results show that exact target probabilities are especially useful for small trees, since draft ranking already retains many likely target paths in larger trees.

\paragraph{First-token agreement.}
\label{sec:mechanism}
Figure~\ref{fig:agreement} groups rounds by whether the draft and target have the same most probable first token and compares root selection's recovery of the oracle advantage. We find that the less frequent disagreement rounds account for most of the gain. Root selection rescales candidate scores within each first-token branch by a common factor, shifting node allocation across branches while leaving later-token draft probabilities unchanged. The detailed comparison is given in Appendix~\ref{app:agreement}.

\begin{table*}[t]
\centering\footnotesize
\setlength{\tabcolsep}{2.5pt}
\begin{tabular*}{\textwidth}{@{\extracolsep{\fill}}lrrrrrrr>{\columncolor{tableroot}[2.5pt][0pt]}r@{}}
\toprule
Target & MT-bench & Translation & Summarization & QA & Math & RAG & HumanEval & All \\
\midrule
Llama-3.1-8B & 4.9 / 3.7 & 3.9 / 3.6 & 4.4 / 3.8 & 6.1 / 4.8 & 3.7 / 3.6 & 4.0 / 3.2 & 2.3 / 1.9 & 3.7 / 3.0 \\
Qwen3-4B & 9.1 / 7.9 & 12.0 / 11.5 & 6.9 / 6.5 & 9.8 / 9.4 & 7.0 / 6.9 & 7.6 / 7.0 & 6.4 / 6.0 & 7.5 / 7.0 \\
Qwen3-8B & 8.8 / 8.9 & 8.7 / 8.6 & 7.1 / 7.1 & 10.2 / 9.4 & 8.3 / 8.0 & 8.2 / 7.5 & 7.2 / 6.8 & 8.0 / 7.7 \\
Qwen2.5-7B & 5.4 / 4.7 & 6.2 / 5.8 & 5.4 / 4.9 & 7.5 / 5.9 & 4.6 / 4.4 & 5.3 / 4.6 & 3.4 / 3.5 & 4.8 / 4.4 \\
Qwen2.5-14B & 5.7 / 4.5 & 5.6 / 5.5 & 5.2 / 4.4 & 7.2 / 5.7 & 3.6 / 3.5 & 5.4 / 4.5 & 3.4 / 3.2 & 4.6 / 4.0 \\
Qwen2.5-32B & 5.3 / 4.1 & 6.3 / 5.5 & 5.3 / 4.5 & 7.2 / 5.6 & 3.8 / 3.0 & 5.9 / 4.4 & 3.1 / 3.1 & 4.5 / 3.8 \\
\bottomrule
\end{tabular*}
\caption{Root selection's expected-acceptance gain over draft ranking at $B=59$, in percent. Each cell gives greedy decoding / temperature $1$. All aggregates the complete prompt set.}
\label{tab:by-task}
\end{table*}

\paragraph{Drawing the first token.}
Table~\ref{tab:selection} compares root-draw selection and PEARL-style pruning with the same root row and first draw. Under the same cap, pruning discards other first-token branches and retains fewer nodes on average.

We therefore enlarge the baseline's preselected tree to match mean verified sizes. At a $59$-node root-draw cap, we find up to $2.4\%$ higher expected acceptance than the enlarged baseline. This baseline averages more verified nodes than root-draw selection when the drafter's most probable first token is drawn, but fewer for other candidate first tokens. Matching details and intervals are in Appendix~\ref{app:selci}.

\begin{figure}[t]
\centering
\includegraphics[width=\columnwidth]{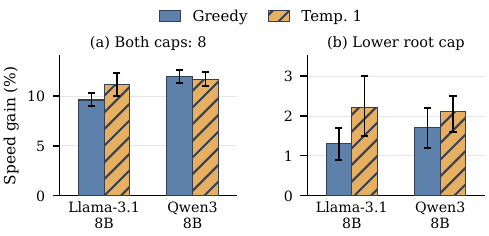}
\caption{Root-draw selection's decoding-speed gain over the PEARL-style decoder on two RTX A6000 GPUs. Left: both caps are $8$. Right: the PEARL-style cap is $59$ and root-draw caps are $40$--$43$.}
\label{fig:speed}
\end{figure}

\paragraph{Online decoding.}
Table~\ref{tab:online-main} compares the mean number of draft tokens accepted across rounds at each decoder's own contexts. We find that root-draw selection exceeds the PEARL-style decoder on all six targets at both temperatures, consistent with the shared-context comparison. Separately, root-draw selection retains higher expected acceptance when both trees are evaluated at its contexts with equal mean sizes. Protocols and additional results appear in Appendix~\ref{app:online}.

\paragraph{Standalone drafters.}
Figure~\ref{fig:speed} shows root-draw selection's speed gains over PEARL-style decoding with standalone drafters. Llama-3.2-1B-Instruct \citep{meta2024llama32} drafts for Llama-3.1-8B and Qwen3-0.6B \citep{qwen3technical2025} for Qwen3-8B. On two RTX A6000 GPUs, the root pass finishes during drafting. Root and root-draw selection also decode faster than draft selection.

To bring mean verified tree sizes closer, we lower root-draw selection's cap to $40$--$43$ and keep the PEARL-style cap at $59$. We find that root-draw selection decodes up to $2.2\%$ faster than the PEARL-style decoder while verifying at least as many nodes on average. These results show that allocating the tree after drawing the first token can improve decoding speed at similar mean tree sizes. Timing and acceptance details appear in Appendix~\ref{app:pipe}.

\paragraph{Longer outputs.}
Table~\ref{tab:long-main} tests whether root selection's gains persist beyond the default output limit. We extend the limit to $1{,}024$ tokens for Llama-3.1-8B and Qwen3-8B and evaluate rounds starting once the default output limit is reached. We find higher expected acceptance than draft ranking at both temperatures and budgets, with all $95\%$ intervals above zero. These gains show that exact root probabilities remain useful for tree selection in later decoding rounds. Counts and whole-output comparisons appear in Appendix~\ref{app:long}.

\paragraph{Task coverage.}
Table~\ref{tab:by-task} reports root selection's expected-acceptance gains over draft ranking by task. We find gains for every evaluated target and task at both temperatures. Since tasks and prompts contribute different numbers of decoding rounds, we also compute the overall gain with equal task weights or equal prompt weights. The gain remains positive under both alternatives. Task-stratified intervals and weighting details are given in Appendix~\ref{app:taskweights}.

\subsection{Rows at other addresses}
\label{sec:approx}

Table~\ref{tab:rerank-main} tests whether rows from other addresses can improve tree selection when used as features. We compare two rerankers: Base predicts target probabilities for first-token candidates from draft and contextual features, while Rows adds features from retained target rows. Both rerankers replace only the first draft-probability factor with their predictions. We train on one half of the prompts and evaluate on the other, then swap their roles.

We find that Rows improves expected acceptance over both draft ranking and Base for every evaluated target at both temperatures and budgets, with all paired $95\%$ intervals above zero. Root selection yields higher expected acceptance than either reranker. By contrast, directly substituting a retained row for the first draft factor lowers expected acceptance in these tests. Thus, rows from other addresses can guide tree allocation through learned predictions without being used as exact next-token distributions. Fitting details and direct-substitution tests are given in Appendices~\ref{app:calib} and~\ref{app:approx}, respectively.

\section{Conclusion}
\label{sec:conclusion}
We show that covering the next root costs accepted tokens when verification scores only its candidate tree. The result holds for verifiers that accept one path, emit one continuation and preserve the target distribution conditional on the chosen tree. For any such decoder using one target pass in each round and target information only along the emitted path, we construct an exact decoder that never covers the next root. Compared with the original decoder, it scores no more prefixes in each round and has emitted at least as many tokens after every round.

We also find that a separately computed root row increases expected acceptance over draft ranking, especially when the draft and target favor different first tokens. Drawing the first token before allocating the tree improves expected acceptance over pruning a preselected tree, even at equal mean verified tree size. In our measured deployment, the separate pass overlaps a standalone drafter on a second GPU, and both selection rules decode faster than draft selection. These findings motivate evaluating the cost of obtaining target rows together with the tree-selection gains they enable.

\FloatBarrier
\label{page:main-end}
\clearpage

\section*{Limitations}
\label{sec:limitations}
Our acceptance study covers six targets with public EAGLE-3 drafters and recorded target rows. The timing results apply to batch-one decoding with a standalone drafter on a second RTX A6000 GPU, where drafting outlasts the separate target pass.

The coverage identity concerns path-aligned, conditionally exact verification with one continuation and identical scored and candidate sets. The run-level comparison uses target information on the emitted path; its extension to off-path outputs under a suitable cache attention mask is described in Appendix~\ref{app:run}.

\section*{Ethics Statement}
The experiments analyze model probabilities on public benchmark prompts from Spec-Bench and HumanEval. The models and datasets retain their licenses and usage restrictions. The selection rules we study keep the target's output distribution and change only how many tokens a round emits.

\bibliography{references_arxiv}
\appendix
\raggedbottom
\makeatletter
\setlength{\@dblfptop}{0pt}
\setlength{\@dblfpsep}{12pt plus 2pt}
\setlength{\@dblfpbot}{0pt plus 1fil}
\makeatother
\setcounter{dbltopnumber}{3}
\renewcommand{\dbltopfraction}{0.85}

\section{Proofs and Scope}
\label{app:proofs}

\subsection{Cache-only recovery}
Fix a finite set of queried prefix addresses $Q$ and a requested address $z\notin Q$. Define two autoregressive targets $p_0,p_1$ that assign the same next-token row at every queried address, but different rows at $z$. Such targets exist because the class imposes no relation between transition rows at different prefixes. They produce identical row-query transcripts. A deterministic recovery function must return the same answer on both transcripts, so it cannot equal both $p_0(\cdot\mid z)$ and $p_1(\cdot\mid z)$. Randomization cannot provide exact recovery with probability one for both targets either.

The query model imposes no structure linking rows. Known finite-state structure, a context-independent distribution or an exactly matching drafter can determine a missing row using additional information. A learned approximation can also improve selection without exact recovery.

\subsection{Fixed-tree ceiling and coverage identity}
\label{app:identity}
Fix the prior context and candidate trie $T$. Extend the $A+1$ tokens that the round emits with fresh target draws at the growing context. Conditional exactness given $T$ makes every emitted token target-distributed given the tokens before it. The fresh draws have the same property by construction.

The extended sequence is therefore an infinite target-distributed stream $X$ with $T$ held fixed. Its first $A$ tokens are accepted, and $X_{A+1}$ is the continuation. Exactness must hold given the realized tree because $L_T$ is computed on that tree. Path alignment gives $A\leq L_T$ pointwise. The tail-sum identity now gives
\begin{equation}
 \begin{aligned}
 \mathbb E[L_T]
 &=\sum_{d\geq1}\Pr(X_{1:d}\in T)\\
 &=\sum_{u\in T\setminus\{\epsilon\}}p(u\mid c)=\tau^*(T).
 \end{aligned}
 \label{eq:tail-sum}
\end{equation}
At each depth, distinct trie paths represent disjoint events. The trie is finite, so the sum is finite. Exhaustive target sampling accepts the maximal initial path and attains this expectation.

When $Q=T$, the continuation is covered exactly when $X_{1:A+1}$ remains inside $T$. Prefix closure makes this equivalent to $A<L_T$. Thus $L_T-A$ vanishes outside coverage, proving Equation~\ref{eq:coverage-gap}. Since the difference is a nonnegative integer, its expectation is positive exactly when coverage has positive probability.

Besides $Q=T$, the proof uses three properties of the verifier, namely path alignment, conditional exactness given the tree and a single continuation. Appendix~\ref{app:setup} shows that EAGLE-3's tree verifier has all three.

\subsection{Scored frontier}
\label{app:frontier-proof}
Let the scored set $Q$ strictly contain $T$, and let $F$ contain the prefixes in $Q\setminus T$ whose parents lie in $T$. Exhaustive traversal on $T$ accepts $X_{1:L_T}$ and emits the address $X_{1:L_T+1}$. This address lies outside $T$, but its parent lies in $T$. The round is therefore covered exactly when the address belongs to $F$.

Conversely, if $X_{1:|u|}=u$ for some $u\in F$, every proper prefix of $u$ lies in $T$, so traversal exits exactly at $u$. Two distinct frontier elements cannot both be prefixes of $X$: the shorter would be a prefix of the longer one's parent and would then lie in $T$. These exit events are disjoint, giving
\begin{equation}
 \begin{aligned}
 \Pr(C)&=\sum_{u\in F}\Pr(X_{1:|u|}=u)\\
 &=\sum_{u\in F}p(u\mid c).
 \end{aligned}
 \label{eq:frontier-probability}
\end{equation}
Every element of $F$ has its parent in $T$, so $T\cup F$ is a trie. Its newly added nodes have no children in $T\cup F$. On the same stream, exhaustive traversal of this enlarged trie therefore accepts one additional token exactly on coverage:
\begin{equation}
 \begin{aligned}
 L_{T\cup F}&=L_T+\mathbf1_C,\\
 \tau^*(T\cup F)-\tau^*(T)&=\Pr(C).
 \end{aligned}
 \label{eq:frontier-extension}
\end{equation}
The second line takes expectations of the first using the tail-sum identity.

Now let a verifier with the three properties accept $A$ tokens on $T$, and let $Q\supseteq T$ be a scored trie. On coverage, $X_{1:A+1}\in Q$, so $L_Q\geq A+1$. Otherwise, $L_Q\geq L_T\geq A$. Hence
\begin{equation}
 \begin{aligned}
 L_Q-A&\geq\mathbf1_C,\\
 \tau^*(Q)-\mathbb E[A]&\geq\Pr(C).
 \end{aligned}
 \label{eq:scored-trie-bound}
\end{equation}
Equality holds for exhaustive traversal when $Q=T\cup F$.

\subsection{Rejection-based verifiers}
\label{app:rejection}
Speculative sampling and its multi-step form for token trees try a node's children in turn. A child $x$ proposed by $q$ is accepted with probability $\min\{1,r(x)/q(x)\}$, where $r$ starts as the node's target row. Each rejection replaces $r$ with the normalized $\max\{r-q,0\}$.

Rejecting $x$ requires $r(x)<q(x)$, so $x$ has zero mass under every later residual row. The continuation drawn after the last rejection is therefore none of the node's children, and a continuation drawn at a leaf also leaves the trie. When the pass scores only the trie, the next root is thus never scored, although these verifiers need not be conditionally exact given the tree.

\subsection{Adaptive selection}
\label{app:adaptive}
Let $c$ be the initial context and $X_1,X_2,\ldots$ the stream emitted across rounds. Condition on the full emitted history before token $j$ and on the past information used to select the current tree. The tree is a function of that information. Thus conditional exactness given the information and the tree gives
\begin{equation}
 \begin{aligned}
 &\Pr(X_j=x\mid X_{<j},\text{past information})\\
 &\qquad=p(x\mid cX_{<j}).
 \end{aligned}
 \label{eq:adaptive-law}
\end{equation}
Taking conditional expectations and multiplying these token-level factors yields the target autoregressive law at any fixed horizon. The grouping into rounds can depend on token values without changing this factorization.

\subsection{A row at the wrong branch}
\label{app:wrong-branch}
Consider a tree whose root has children $a$ and $b$. Let the target be deterministic with transitions $c\mapsto a$, $ca\mapsto b$, $cab\mapsto a$, and $cb\mapsto b$. A rule that verifies the siblings $a,b$ as one sequence can accept both by consuming rows at $c$ and $ca$, then take its continuation from the row scored at $cb$. It emits $abb$. Autoregressive decoding emits $aba$, because the third row must be conditioned on $cab$. Exactness of every scored row at its own address therefore does not ensure exactness of a sequence assembled from incorrect addresses.

\subsection{Covering over a run}
\label{app:run}
\begin{proof}[Proof of Theorem~\ref{thm:run}]
\emph{Common target stream.}
Every scored row is exact. Extend each round's output with fresh target draws as above. The result is a target stream whose first $A+1$ tokens are the round's output. When the first token is drawn before selection, it comes from the held root row and the verifier is exact given that draw, so the same construction applies.

To realize this coupling, draw target tokens in order. At each $X_j$, declare it the continuation with the round's conditional stopping probability given $X_{1:j}$, the tree, the history and survival to that position. This decision does not depend on later draws, which are fresh target draws after a stop. With randomness independent of the stream $X$, the first decoder therefore emits $X_{1:n_t}$ after round $t$, starting from $n_0=0$ and the shared prefill context $c$.

\emph{Comparator construction.}
Let the second decoder emit $X_{1:n'_t}$, with $n'_0=0$. Assume inductively that $n'_t\geq n_t$. The extra tokens it has already emitted form the string $u_t=X_{n_t+1:n'_t}$, which is empty when the lengths agree.

Using the same independent randomness, the second decoder computes the first decoder's tries $T_{t+1}$ and $Q_{t+1}$ at $cX_{1:n_t}$. It then scores and verifies the trie
\begin{equation}
 \begin{aligned}
 Q'_{t+1}=\{\epsilon\}\cup
 \{w\ne\epsilon:u_tw\in Q_{t+1}\}.
 \end{aligned}
 \label{eq:run-subtrie}
\end{equation}
These are the root and the scored prefixes that extend its own emitted context. It traverses $Q'_{t+1}$ exhaustively, taking draws from $X$. Its scored and candidate tries coincide, so Theorem~\ref{thm:characterization} gives zero coverage.

\emph{Emitted lengths and query count.}
Let $L$ be the largest $d$ with $X_{n_t+1:n_t+d}\in Q_{t+1}$. The first decoder accepts only within $T_{t+1}\subseteq Q_{t+1}$, so its accepted length satisfies $A\leq L$ and
\begin{equation}
 n_{t+1}=n_t+A+1\leq n_t+L+1.
 \label{eq:run-first-length}
\end{equation}
If its round is covered, the continuation also lies in $Q_{t+1}$, giving the stronger bound $n_{t+1}\leq n_t+L$.

Given $n'_t\geq n_t$, every remaining stream prefix through position $n_t+L$ lies in $Q'_{t+1}$. The next token exits $Q_{t+1}$. If $n'_t\leq n_t+L$, exhaustive traversal therefore emits through $X_{n_t+L+1}$. Otherwise it emits one token. Consequently,
\begin{equation}
 \begin{aligned}
 n'_{t+1}&=\max(n'_t,\,n_t+L)+1\\
 &\geq n_t+L+1.
 \end{aligned}
 \label{eq:run-advance}
\end{equation}
This is at least $n_{t+1}$ and strictly larger after a covered round, completing the length induction.

The map $w\mapsto u_tw$ injects the nonempty strings of $Q'_{t+1}$ into $Q_{t+1}\setminus\{\epsilon\}$. Including the root, the comparator scores at most $|Q_{t+1}|$ prefixes. When $Q_{t+1}=T_{t+1}$, it also has at most $B$ non-root nodes.

\emph{Available information.}
It remains to check, in the same induction, that the comparator uses only its own past. After round $t$, it has scored every prefix $X_{1:j}$ with $j<n'_t$, as a tree node or a round's root. If the first decoder scored $X_{1:j}$ in round $s\leq t$, then $X_{n_{s-1}+1:j}\in Q_s$. Thus $j-n_{s-1}$ is at most that round's $L$, and Equation~\ref{eq:run-advance} gives $n'_t\geq n'_s>j$.

The comparator therefore holds every target output that the first decoder holds on $X$. Whenever the first decoder holds its root row, the comparator has also emitted the first token drawn from that row. For a root row retained by the shared prefill, both decoders hold it initially; the comparator draws $X_1$ as the first step of its exhaustive traversal whenever the first decoder does.

Since $n_{t+1}\leq n'_{t+1}$, the first decoder's round ends within the comparator's emitted stream. The comparator can therefore replay its stopping decisions using tokens and target outputs already held, together with the same independent randomness. Its own traversal uses exact target conditionals, so the token factorization above preserves the target law.
\end{proof}

\paragraph{Off-path target outputs.}
Now allow the first decoder to read outputs at scored prefixes off $X$, and allow a pass to score prefixes that branch before the committed context ends. For such a prefix, the attention mask gives its new tokens their positions in that prefix and hides later cached positions. The returned rows remain exact because cached keys and values depend only on earlier tokens.

Keep the notation $u_t=X_{n_t+1:n'_t}$. In round $t+1$, the comparator additionally scores $cX_{1:n_t}v$ for every $v\in Q_{t+1}$ that is neither a prefix nor an extension of $u_t$. Each such $v$ is nonempty and lies outside the image of $w\mapsto u_tw$. Thus the total still does not exceed $|Q_{t+1}|$. These extra prefixes leave $X$, whereas the comparator's next context lies on $X$ and outside $Q'_{t+1}$, so it remains uncovered.

An off-path prefix $cX_{1:n_t}v$ cannot have $v$ as a prefix of $u_t$, since that would place it on $X$. If $v$ extends $u_t$, the comparator scores it through $Q'_{t+1}$; otherwise it is one of the additional prefixes. The comparator therefore holds every off-path target output available to the first decoder as well.

If the first decoder also scores prefixes that branch before its committed context ends, the comparator scores each of them too. They leave $X$, so the query count, zero coverage and availability of target outputs follow in the same way. The remainder of the coupling is unchanged.

\section{Stopping Policies and Finite Examples}
\label{app:fixture}
\subsection{Stopping placement and row cost}
\label{app:stopping}
Consider an IID uniform binary target on
\begin{equation}
 \begin{aligned}
 T\setminus\{\epsilon\}=\{&a,aa,aaa,aab,\\
 &ab,b,ba\},
 \end{aligned}
 \label{eq:binary-trie}
\end{equation}
with $Q=T$ and eight scored rows including the root. A cap $k$ stops at the first tree exit or after $k$ accepted tokens and emits the next unaccepted draw. Sampled-leaf stopping emits the current draw when it labels a leaf child, otherwise following matching children until exit. The decision uses only the current draw and trie topology.

Every rule scores the same eight rows before stopping. Exhaustive traversal has coverage $0$ and expected acceptance $2$. Cap~1 has coverage $3/4$, expected acceptance $1$ and shortfall $1$, sampled-leaf stopping has coverage $3/4$, expected acceptance $5/4$ and shortfall $3/4$, and cap~2 has coverage $1/4$, expected acceptance $7/4$ and shortfall $1/4$.

Cap~1 and sampled-leaf stopping have equal coverage but defer different path lengths. For cap~1, covered continuation addresses are $aa$, $ab$ and $ba$, each with probability $1/4$. The first leaves two accepted tokens unused and the others one. Sampled-leaf stopping instead ends at $aaa,aab,ab,ba$ with probabilities $1/8,1/8,1/4,1/4$, each leaving one token unused.

\paragraph{Scored prefixes beyond the trie.}
Let the candidate trie be $T=\{\epsilon,a\}$ and the scored set be $Q=\{\epsilon,a,b\}$ under the same target. Exhaustive traversal has expected acceptance $1/2$, coverage $1/2$ and zero shortfall relative to its own trie. Scoring only the two rows in $T$ gives zero coverage. Making $b$ a candidate raises expected acceptance to $1$, and the added token is accepted exactly in the covered rounds.

\begin{table*}[t]
\centering\small
\begin{tabular*}{\textwidth}{@{\extracolsep{\fill}}ll@{}}
\toprule
Checkpoint & Role \\
\midrule
\href{https://huggingface.co/meta-llama/Llama-3.1-8B-Instruct/tree/0e9e39f249a16976918f6564b8830bc894c89659}{\path{meta-llama/Llama-3.1-8B-Instruct}} & Target \\
\href{https://huggingface.co/yuhuili/EAGLE3-LLaMA3.1-Instruct-8B/tree/ada412b672e293d682423de84a095447bf38a637}{\path{yuhuili/EAGLE3-LLaMA3.1-Instruct-8B}} & EAGLE-3 drafter \\
\href{https://huggingface.co/meta-llama/Llama-3.2-1B-Instruct/tree/9213176726f574b556790deb65791e0c5aa438b6}{\path{meta-llama/Llama-3.2-1B-Instruct}} & Standalone drafter \\
\midrule
\href{https://huggingface.co/Qwen/Qwen3-4B/tree/1cfa9a7208912126459214e8b04321603b3df60c}{\path{Qwen/Qwen3-4B}} & Target \\
\href{https://huggingface.co/AngelSlim/Qwen3-4B_eagle3/tree/fd331e59626c8e95c392381a16ee59d518727fbb}{\path{AngelSlim/Qwen3-4B_eagle3}} & EAGLE-3 drafter \\
\midrule
\href{https://huggingface.co/Qwen/Qwen3-8B/tree/b968826d9c46dd6066d109eabc6255188de91218}{\path{Qwen/Qwen3-8B}} & Target \\
\href{https://huggingface.co/AngelSlim/Qwen3-8B_eagle3/tree/9629dfce7a4a10564dd48d3e5485c3976095653c}{\path{AngelSlim/Qwen3-8B_eagle3}} & EAGLE-3 drafter \\
\href{https://huggingface.co/Qwen/Qwen3-0.6B/tree/c1899de289a04d12100db370d81485cdf75e47ca}{\path{Qwen/Qwen3-0.6B}} & Standalone drafter \\
\midrule
\href{https://huggingface.co/Qwen/Qwen2.5-7B-Instruct/tree/a09a35458c702b33eeacc393d103063234e8bc28}{\path{Qwen/Qwen2.5-7B-Instruct}} & Target \\
\href{https://huggingface.co/ruipeterpan/Qwen2.5-7B-Instruct_EAGLE3_UltraChat/tree/a4de60731c3cdb9a46b2efd27ef3e47c5ec65338}{\path{ruipeterpan/Qwen2.5-7B-Instruct_EAGLE3_UltraChat}} & EAGLE-3 drafter \\
\midrule
\href{https://huggingface.co/Qwen/Qwen2.5-14B-Instruct/tree/cf98f3b3bbb457ad9e2bb7baf9a0125b6b88caa8}{\path{Qwen/Qwen2.5-14B-Instruct}} & Target \\
\href{https://huggingface.co/ruipeterpan/Qwen2.5-14B-Instruct_EAGLE3_UltraChat/tree/93c21df73b040d27398febafbfc6320379bc5fd4}{\path{ruipeterpan/Qwen2.5-14B-Instruct_EAGLE3_UltraChat}} & EAGLE-3 drafter \\
\midrule
\href{https://huggingface.co/Qwen/Qwen2.5-32B-Instruct/tree/5ede1c97bbab6ce5cda5812749b4c0bdf79b18dd}{\path{Qwen/Qwen2.5-32B-Instruct}} & Target \\
\href{https://huggingface.co/ruipeterpan/Qwen2.5-32B-Instruct_EAGLE3_UltraChat/tree/4a161cd6e29856f7a802d56939815ecdb447700f}{\path{ruipeterpan/Qwen2.5-32B-Instruct_EAGLE3_UltraChat}} & EAGLE-3 drafter \\
\bottomrule
\end{tabular*}
\caption{Target checkpoints and their paired EAGLE-3 and standalone drafters. Checkpoint names link to the evaluated Hugging Face revisions.}
\label{tab:models}
\end{table*}
\begin{table*}[t]
\centering\small
\begin{tabular*}{\textwidth}{@{\extracolsep{\fill}}lrrrrrrrr@{}}
\toprule
Target & MT-bench & Translation & Summarization & QA & Math & RAG & HumanEval & All \\
\midrule
\rowcolor{tablegroup}\multicolumn{9}{c}{Greedy decoding} \\
Llama-3.1-8B & 5.53 & 2.65 & 5.10 & 4.52 & 6.05 & 5.48 & 6.68 & 5.51 \\
Qwen3-4B & 3.26 & 2.59 & 3.05 & 3.03 & 3.74 & 3.38 & 3.51 & 3.34 \\
Qwen3-8B & 3.05 & 2.20 & 2.91 & 2.69 & 3.63 & 3.22 & 3.59 & 3.22 \\
Qwen2.5-7B & 4.29 & 3.20 & 3.62 & 3.69 & 5.14 & 4.03 & 4.73 & 4.29 \\
Qwen2.5-14B & 4.48 & 3.24 & 3.81 & 3.71 & 5.26 & 4.13 & 5.00 & 4.49 \\
Qwen2.5-32B & 4.61 & 3.24 & 3.65 & 3.84 & 5.22 & 4.04 & 5.05 & 4.49 \\
\rowcolor{tablegroup}\multicolumn{9}{c}{Sampling at temperature $1$} \\
Llama-3.1-8B & 4.45 & 2.29 & 4.42 & 2.91 & 5.05 & 4.26 & 5.99 & 4.53 \\
Qwen3-4B & 3.21 & 2.57 & 3.02 & 2.85 & 3.63 & 3.28 & 3.45 & 3.25 \\
Qwen3-8B & 2.91 & 2.19 & 2.82 & 2.52 & 3.53 & 3.03 & 3.46 & 3.08 \\
Qwen2.5-7B & 3.68 & 2.86 & 3.03 & 3.05 & 4.53 & 3.48 & 4.38 & 3.75 \\
Qwen2.5-14B & 3.79 & 3.07 & 3.14 & 3.16 & 4.79 & 3.52 & 4.49 & 3.88 \\
Qwen2.5-32B & 3.82 & 3.16 & 3.23 & 3.34 & 4.95 & 3.56 & 4.36 & 3.93 \\
\bottomrule
\end{tabular*}
\caption{Mean tokens emitted in each round, counting accepted draft tokens and the continuation, by task.}
\label{tab:tau}
\end{table*}

\section{Measurement Protocol}
\label{app:setup}
\paragraph{Checkpoints and prompts.}
Table~\ref{tab:models} lists the targets and drafters. Apart from the pipelined runs of Appendix~\ref{app:pipe}, all runs use the EAGLE-3 reference implementation with PyTorch 2.7.1 \citep{paszke2019pytorch} and bfloat16 weights. These runs use eager attention, except that the standalone drafters use scaled dot-product attention as described in Appendix~\ref{app:lm}. Each non-pipelined run uses one NVIDIA A100 80GB, H100 NVL or RTX A6000 GPU, except that the Qwen2.5-32B runs of root and root-draw selection in Tables~\ref{tab:online-main} and~\ref{tab:online}, and their second runs at temperature $1$, use two RTX A6000 GPUs.

Prompts use each target's chat template with one user turn and no added system message, so the Qwen2.5 and Llama-3.1 templates insert their default system text, and the Qwen3 templates use their non-thinking mode.

Spec-Bench supplies the first turns of $80$ MT-bench conversations \citep{zheng2023judging}, ten from each of its eight categories, and $80$ prompts each for translation \citep{bojar2014findings}, summarization \citep{hermann2015teaching,nallapati2016abstractive}, question answering \citep{kwiatkowski2019natural}, mathematical reasoning \citep{cobbe2021training} and retrieval-augmented generation \citep{kwiatkowski2019natural,karpukhin2020dense}. HumanEval supplies $164$ function signatures with docstrings, each used verbatim as the user turn.

Generation stops after the round in which an end-of-sequence token appears or the output reaches the default limit, so an output holds at most $264$ tokens. The twelve default-length runs contain $365{,}922$ verification rounds. The runs of Appendix~\ref{app:long} use a limit of $1{,}024$ new tokens, so their outputs hold at most $1{,}032$ tokens.

\paragraph{Draft pool and tree.}
The drafter first proposes its ten most probable tokens and expands all of them. At each of seven further steps, it adds the ten most probable children of every expanded node to the pool, and after each of the first six it expands the ten new candidates with the highest cumulative draft log probability. The pool thus holds $10$ candidates at depth one and $100$ at each depth from two to eight, $710$ in total, and $70$ of them are expanded.

Candidates are ranked by cumulative draft log probability, then by depth and then by pool index. The verified tree holds the $59$ highest-ranked candidates, and the parent of every candidate is the root or an expanded node. Selection rules that keep draft factors take them from the same cumulative scores. Every rule's score is non-increasing along a path, so with ties broken by depth its $B$ highest-ranked candidates form an ancestor-closed set of exactly $B$ nodes, and the oracle's set maximizes $V$ among all ancestor-closed sets of $B$ candidates.

\paragraph{Verification.}
Under greedy decoding, traversal accepts the child labeled by the target argmax until the argmax labels no child, and that token becomes the continuation. Under sampling, each reached node draws a token from its full target row, without top-k or top-p truncation, with a generator seeded for each prompt and run, and proceeds in the same way.

EAGLE-3's sampling verifier visits the distinct children of a node in order and accepts each with its probability under the current row, otherwise removing it from the row and renormalizing. The probability of accepting the $k$-th visited child is thus the product of the earlier rejection probabilities and its renormalized probability, and this product telescopes to its target probability. If every child is rejected, the continuation is drawn from the row restricted to the remaining tokens. This is the law of exhaustive target sampling.

The acceptance test uses no draft probability, so this law holds given the realized tree. The accepted tokens also form a root path of that tree, and exactly one continuation is emitted, so the verifier has the three properties that Theorem~\ref{thm:characterization} requires of it. The key-value cache then keeps the root and the accepted path, and the continuation becomes the next root.

\paragraph{Scored rows.}
The verification pass scores the root, the verified tree, all expanded nodes and the $160$ highest-ranked candidates under a tree attention mask, about $170$ prefixes in each round. Each position attends only to its own prefix, so the additional prefixes leave the rows of the verified tree unchanged up to rounding.

EAGLE-3's decoder in Appendix~\ref{app:reuse} runs online at its own contexts with a pass that scores only its tree. At $B=59$, the expected acceptance of EAGLE-3's tree on the recorded rounds lies inside the $95\%$ prompt-bootstrap interval of this decoder's realized acceptance in every run. The relative difference is at most $1.1\%$.

Every candidate's parent is scored, so the pass gives the target probability of all $710$ candidate edges. Scored sets of other sizes are defined by address membership, since the row at a covered address is the next round's root row.

\paragraph{Retained rows.}
After each round, we retain the row at the emitted address when the verification pass scored it, together with up to $24$ rows, taken in pool order, at other scored addresses whose final token equals the continuation. When the root token equals the continuation, the root row joins the second group without counting toward the $24$. Each retained row is compared with the next round's root row, computed at the complete emitted context. Total variation and argmax agreement use full-vocabulary softmax probabilities in float32.

\begin{figure*}[t]
\centering
\includegraphics[width=\textwidth]{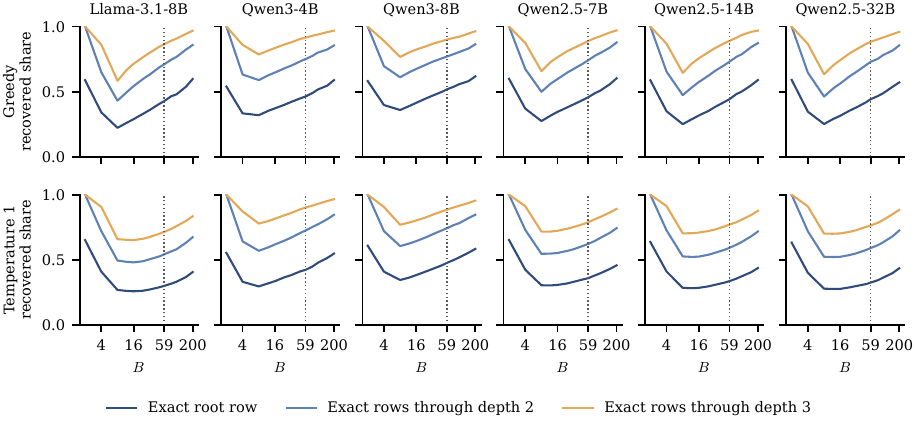}
\caption{Share of the oracle advantage over draft selection recovered by exact rows at increasing depths, for each target and temperature. The dotted line marks EAGLE-3's $59$-node tree.}
\label{fig:recovery-grid}
\end{figure*}

\paragraph{Rows at other addresses.}
A second pass replays each recorded generation through its target. Each round scores the recorded prefixes under the same tree mask and then commits the recorded accepted path and continuation, so contexts, pools and outputs are those of the recorded run. The replayed root rows equal the recorded ones in the nine runs replayed on the capture's GPU type. In the other three, which ran on another GPU type, the probability of a first-level candidate differs from the recorded one by at most $0.007$ on average in each run.

Before the first round, the token-keyed cache holds, for every prompt token, the row computed at its latest occurrence. After each round, it stores the row of every verified prefix under the prefix's final token, keeping the highest-ranked prefix when several end in the same token, and then stores the root's row under the root token.

The same-token row is the previous root's row when the previous root ends in the continuation token, and otherwise the row of the highest-ranked prefix of the previous verified tree that ends in it. The preceding row is the row at the last accepted node, or at the root when no draft token is accepted. Its probability of the continuation is set to zero before its first-level candidates are ranked and before it is compared with the next root row. At every budget, including $B=8$, the three rows come from this replay, whose verified trees are EAGLE-3's $59$-node trees.

\paragraph{Compute.}
The twelve runs take about $9$ GPU hours, including model loading, and the replay takes about $8$ more. Online decoding in Appendix~\ref{app:online} and the covering comparison in Appendix~\ref{app:reuse} together take about $121$ GPU hours, including the PEARL-style and covering decoders, the second runs at temperature $1$ and the further runs of root-draw selection. The $1{,}024$-token runs take about $4$ more GPU hours. The standalone-drafter runs in Appendix~\ref{app:lm} take about $7$ GPU hours, and their pipelined decoding in Appendix~\ref{app:pipe} takes about $50$, counting both GPUs.

\begin{table*}[t]
\centering\footnotesize
\setlength{\tabcolsep}{2pt}
\begin{tabular*}{\textwidth}{@{\extracolsep{\fill}}lrrrrrrr@{}}
\toprule
& & Expected & \multicolumn{3}{c}{Gain over draft selection} & \multicolumn{2}{c}{Recovered oracle advantage, \%} \\
\cmidrule(lr){3-3}\cmidrule(lr){4-6}\cmidrule(lr){7-8}
Target & $B$ & Depth 2 & \cellcolor{tableroot}Root & Root draw & Oracle & \cellcolor{tableroot}Root & Depth 2 \\
\midrule
\rowcolor{tablegroup}\multicolumn{8}{c}{Greedy decoding} \\
Llama-3.1-8B & 8 & 3.925 & \cellcolor{tableroot}0.382 [0.364, 0.401] & 0.468 [0.450, 0.485] & 1.713 [1.657, 1.766] & \cellcolor{tableroot}22 [21, 23] & 43 [42, 44] \\
 & 59 & 4.786 & \cellcolor{tableroot}0.167 [0.157, 0.176] & 0.252 [0.241, 0.265] & 0.390 [0.373, 0.406] & \cellcolor{tableroot}43 [42, 44] & 71 [70, 72] \\
Qwen3-4B & 8 & 2.287 & \cellcolor{tableroot}0.338 [0.328, 0.347] & 0.438 [0.428, 0.449] & 1.056 [1.037, 1.076] & \cellcolor{tableroot}32 [31, 33] & 59 [58, 60] \\
 & 59 & 2.625 & \cellcolor{tableroot}0.176 [0.171, 0.182] & 0.276 [0.270, 0.284] & 0.380 [0.370, 0.389] & \cellcolor{tableroot}46 [45, 48] & 75 [74, 76] \\
Qwen3-8B & 8 & 2.172 & \cellcolor{tableroot}0.363 [0.354, 0.372] & 0.527 [0.517, 0.537] & 1.009 [0.985, 1.032] & \cellcolor{tableroot}36 [35, 37] & 61 [60, 62] \\
 & 59 & 2.485 & \cellcolor{tableroot}0.178 [0.173, 0.183] & 0.342 [0.335, 0.348] & 0.343 [0.333, 0.353] & \cellcolor{tableroot}52 [50, 53] & 77 [76, 78] \\
Qwen2.5-7B & 8 & 2.997 & \cellcolor{tableroot}0.351 [0.338, 0.364] & 0.506 [0.492, 0.519] & 1.277 [1.243, 1.311] & \cellcolor{tableroot}28 [27, 28] & 50 [49, 51] \\
 & 59 & 3.547 & \cellcolor{tableroot}0.157 [0.150, 0.164] & 0.311 [0.303, 0.320] & 0.343 [0.332, 0.354] & \cellcolor{tableroot}46 [45, 47] & 74 [73, 75] \\
Qwen2.5-14B & 8 & 3.104 & \cellcolor{tableroot}0.355 [0.341, 0.367] & 0.492 [0.479, 0.505] & 1.411 [1.376, 1.447] & \cellcolor{tableroot}25 [24, 26] & 47 [46, 48] \\
 & 59 & 3.749 & \cellcolor{tableroot}0.159 [0.153, 0.166] & 0.297 [0.289, 0.304] & 0.360 [0.349, 0.371] & \cellcolor{tableroot}44 [43, 45] & 73 [72, 74] \\
Qwen2.5-32B & 8 & 3.103 & \cellcolor{tableroot}0.348 [0.335, 0.362] & 0.483 [0.469, 0.498] & 1.379 [1.348, 1.413] & \cellcolor{tableroot}25 [24, 26] & 46 [45, 47] \\
 & 59 & 3.748 & \cellcolor{tableroot}0.157 [0.150, 0.164] & 0.292 [0.284, 0.300] & 0.354 [0.344, 0.365] & \cellcolor{tableroot}44 [43, 46] & 73 [72, 74] \\
\rowcolor{tablegroup}\multicolumn{8}{c}{Sampling at temperature $1$} \\
Llama-3.1-8B & 8 & 2.831 & \cellcolor{tableroot}0.210 [0.199, 0.221] & 0.570 [0.553, 0.587] & 0.781 [0.744, 0.816] & \cellcolor{tableroot}27 [26, 28] & 50 [48, 51] \\
 & 59 & 3.720 & \cellcolor{tableroot}0.107 [0.101, 0.112] & 0.342 [0.323, 0.364] & 0.355 [0.338, 0.371] & \cellcolor{tableroot}30 [29, 31] & 54 [53, 55] \\
Qwen3-4B & 8 & 2.167 & \cellcolor{tableroot}0.295 [0.287, 0.302] & 0.451 [0.441, 0.462] & 0.994 [0.977, 1.013] & \cellcolor{tableroot}30 [29, 30] & 57 [56, 58] \\
 & 59 & 2.524 & \cellcolor{tableroot}0.158 [0.154, 0.163] & 0.286 [0.279, 0.293] & 0.374 [0.365, 0.384] & \cellcolor{tableroot}42 [41, 43] & 73 [71, 74] \\
Qwen3-8B & 8 & 1.998 & \cellcolor{tableroot}0.313 [0.306, 0.320] & 0.553 [0.544, 0.562] & 0.907 [0.884, 0.930] & \cellcolor{tableroot}35 [34, 35] & 61 [60, 62] \\
 & 59 & 2.330 & \cellcolor{tableroot}0.160 [0.155, 0.164] & 0.365 [0.358, 0.371] & 0.335 [0.325, 0.345] & \cellcolor{tableroot}48 [47, 49] & 74 [73, 75] \\
Qwen2.5-7B & 8 & 2.350 & \cellcolor{tableroot}0.239 [0.230, 0.248] & 0.577 [0.566, 0.588] & 0.782 [0.752, 0.811] & \cellcolor{tableroot}31 [30, 31] & 55 [54, 56] \\
 & 59 & 2.953 & \cellcolor{tableroot}0.120 [0.115, 0.124] & 0.368 [0.357, 0.380] & 0.333 [0.321, 0.344] & \cellcolor{tableroot}36 [35, 37] & 62 [61, 63] \\
Qwen2.5-14B & 8 & 2.412 & \cellcolor{tableroot}0.226 [0.218, 0.234] & 0.554 [0.542, 0.565] & 0.790 [0.763, 0.816] & \cellcolor{tableroot}29 [28, 30] & 53 [51, 54] \\
 & 59 & 3.098 & \cellcolor{tableroot}0.115 [0.111, 0.120] & 0.345 [0.336, 0.355] & 0.343 [0.333, 0.353] & \cellcolor{tableroot}34 [33, 34] & 59 [58, 60] \\
Qwen2.5-32B & 8 & 2.446 & \cellcolor{tableroot}0.217 [0.209, 0.224] & 0.537 [0.527, 0.547] & 0.778 [0.755, 0.802] & \cellcolor{tableroot}28 [27, 29] & 52 [51, 54] \\
 & 59 & 3.124 & \cellcolor{tableroot}0.111 [0.108, 0.115] & 0.336 [0.329, 0.344] & 0.342 [0.333, 0.351] & \cellcolor{tableroot}33 [32, 33] & 59 [58, 60] \\
\bottomrule
\end{tabular*}
\caption{Selection gains in expected accepted draft tokens and recovered shares of the oracle advantage. Depth 2 under Expected gives acceptance with two exact factors. Brackets give $95\%$ prompt-bootstrap intervals.}
\label{tab:selection-ci}
\end{table*}

\begin{table*}[t]
\centering\small
\begin{tabular*}{\textwidth}{@{\extracolsep{\fill}}lrrrr@{}}
\toprule
& \multicolumn{2}{c}{Root gain} & \multicolumn{2}{c}{Depth-two gain} \\
\cmidrule(lr){2-3}\cmidrule(lr){4-5}
Target & \cellcolor{tableroot}Rounds & \cellcolor{tableroot}Equal tasks & Rounds & Equal tasks \\
\midrule
\rowcolor{tablegroup}\multicolumn{5}{c}{Greedy decoding} \\
Llama-3.1-8B & \cellcolor{tableroot}0.167 & \cellcolor{tableroot}0.169 [0.160, 0.179] & 0.277 & 0.279 [0.266, 0.292] \\
Qwen3-4B & \cellcolor{tableroot}0.176 & \cellcolor{tableroot}0.181 [0.176, 0.187] & 0.285 & 0.286 [0.279, 0.293] \\
Qwen3-8B & \cellcolor{tableroot}0.178 & \cellcolor{tableroot}0.169 [0.163, 0.174] & 0.264 & 0.244 [0.238, 0.250] \\
Qwen2.5-7B & \cellcolor{tableroot}0.157 & \cellcolor{tableroot}0.162 [0.155, 0.170] & 0.254 & 0.260 [0.251, 0.269] \\
Qwen2.5-14B & \cellcolor{tableroot}0.159 & \cellcolor{tableroot}0.161 [0.154, 0.168] & 0.263 & 0.264 [0.254, 0.273] \\
Qwen2.5-32B & \cellcolor{tableroot}0.157 & \cellcolor{tableroot}0.164 [0.156, 0.172] & 0.258 & 0.267 [0.258, 0.277] \\
\rowcolor{tablegroup}\multicolumn{5}{c}{Sampling at temperature $1$} \\
Llama-3.1-8B & \cellcolor{tableroot}0.107 & \cellcolor{tableroot}0.106 [0.101, 0.111] & 0.193 & 0.189 [0.182, 0.197] \\
Qwen3-4B & \cellcolor{tableroot}0.158 & \cellcolor{tableroot}0.164 [0.159, 0.169] & 0.271 & 0.272 [0.266, 0.279] \\
Qwen3-8B & \cellcolor{tableroot}0.160 & \cellcolor{tableroot}0.152 [0.147, 0.158] & 0.248 & 0.231 [0.224, 0.237] \\
Qwen2.5-7B & \cellcolor{tableroot}0.120 & \cellcolor{tableroot}0.120 [0.114, 0.126] & 0.207 & 0.206 [0.197, 0.215] \\
Qwen2.5-14B & \cellcolor{tableroot}0.115 & \cellcolor{tableroot}0.117 [0.112, 0.123] & 0.203 & 0.204 [0.196, 0.211] \\
Qwen2.5-32B & \cellcolor{tableroot}0.111 & \cellcolor{tableroot}0.115 [0.110, 0.119] & 0.202 & 0.206 [0.200, 0.212] \\
\bottomrule
\end{tabular*}
\caption{Gains over draft selection at $B=59$ under round weighting and equal task weighting. Brackets give $95\%$ intervals from $1{,}000$ resamples of prompts within tasks.}
\label{tab:equal-task}
\end{table*}

\section{Additional Results}
\label{app:results}
\subsection{Acceptance length}
\label{app:tau}
Table~\ref{tab:tau} gives the mean number of tokens each round emits, its accepted draft tokens and continuation. Rounds emit $3.22$ to $5.51$ tokens under greedy decoding and $3.08$ to $4.53$ under sampling. Counted in emitted tokens, root selection's gain over draft selection is $2.3\%$ to $5.5\%$ at $B=59$, with a median of $3.5\%$, and $6.1\%$ to $14.2\%$ at $B=8$. Root-draw selection's gain over the PEARL-style decoder is $2.9\%$ to $4.9\%$ at $B=59$ and $7.7\%$ to $9.0\%$ at $B=8$. All four gains divide the difference in expected accepted draft tokens by the baseline's expected emitted tokens, continuation included, so each interval also lies above zero.

\begin{table*}[t]
\centering\small
\begin{tabular*}{\textwidth}{@{\extracolsep{\fill}}lrrrrrrrr@{}}
\toprule
& \multicolumn{2}{c}{$M=100$} & \multicolumn{2}{c}{$M=200$} & \multicolumn{2}{c}{$M=400$} & \multicolumn{2}{c}{$M=710$} \\
\cmidrule(lr){2-3}\cmidrule(lr){4-5}\cmidrule(lr){6-7}\cmidrule(lr){8-9}
Target & Advantage & Root, \% & Advantage & Root, \% & Advantage & Root, \% & Advantage & Root, \% \\
\midrule
\rowcolor{tablegroup}\multicolumn{9}{c}{Greedy decoding} \\
Llama-3.1-8B & 0.186 & 58 & 0.321 & 48 & 0.380 & 43 & 0.390 & 43 \\
Qwen3-4B & 0.135 & 63 & 0.279 & 52 & 0.359 & 47 & 0.380 & 46 \\
Qwen3-8B & 0.125 & 67 & 0.247 & 58 & 0.319 & 53 & 0.343 & 52 \\
Qwen2.5-7B & 0.147 & 59 & 0.267 & 51 & 0.327 & 47 & 0.343 & 46 \\
Qwen2.5-14B & 0.164 & 58 & 0.293 & 49 & 0.348 & 45 & 0.360 & 44 \\
Qwen2.5-32B & 0.159 & 58 & 0.284 & 50 & 0.341 & 45 & 0.354 & 44 \\
\rowcolor{tablegroup}\multicolumn{9}{c}{Sampling at temperature $1$} \\
Llama-3.1-8B & 0.163 & 44 & 0.289 & 34 & 0.343 & 31 & 0.355 & 30 \\
Qwen3-4B & 0.134 & 59 & 0.275 & 48 & 0.354 & 43 & 0.374 & 42 \\
Qwen3-8B & 0.122 & 64 & 0.241 & 54 & 0.311 & 49 & 0.335 & 48 \\
Qwen2.5-7B & 0.139 & 49 & 0.257 & 41 & 0.315 & 37 & 0.333 & 36 \\
Qwen2.5-14B & 0.149 & 48 & 0.272 & 38 & 0.329 & 34 & 0.343 & 34 \\
Qwen2.5-32B & 0.147 & 46 & 0.270 & 37 & 0.328 & 33 & 0.342 & 33 \\
\bottomrule
\end{tabular*}
\caption{Oracle advantage over draft selection at $B=59$ when the pool holds the $M$ highest-ranked candidates, and the share that root selection recovers.}
\label{tab:pool}
\end{table*}

\subsection{Selection values with intervals}
\label{app:selci}
Table~\ref{tab:selection-ci} gives the gains of root, root-draw and oracle selection over draft selection and the share of the oracle advantage that root and depth-two selection recover. The root-draw value of a round is one for the drawn token plus the path mass, summed over first-level candidates, of the candidates that each keeps beneath it. When the pool's first level lacks the drawn token, the tree holds that token alone. This happens in $9\%$ to $16\%$ of greedy rounds and with probability $11\%$ to $19\%$ at temperature $1$, and exhaustive traversal of draft selection's tree, which lies in the same pool, then accepts no draft token, while root-draw selection accepts the drawn one.

Each bootstrap draw resamples prompts with all their rounds and keeps every rule and budget together. Every interval lies above zero at both budgets, as does every interval of a greedy root row entered with its softmax probabilities.

Table~\ref{tab:nodes} expresses the root gain in nodes. For each run, it gives the smallest budget at which draft selection reaches root selection's value at a reference budget and the smallest at which root selection reaches draft selection's, searching every budget up to $710$ within the same pool of depths one to eight and repeating the search in every resample. At $B=8$, draft selection needs $11$ to $22$ nodes to reach root selection's value, and root selection reaches draft selection's with $4$ to $7$.

A PEARL-style decoder \citep{liu2024pearl} uses the root row only for the first draw. It takes the same draw as root-draw selection and counts it as one accepted token, then keeps the part of draft selection's tree beneath the drawn token, or the drawn token alone when that tree lacks it. Its value exceeds that of draft selection by the probability that the drawn token is missing from the tree's first level. Draft selection keeps at most $B-1$ descendants of a kept first-level token, and these are its highest-ranked descendants. Root-draw selection keeps the $B-1$ highest-ranked descendants of the drawn token, or all of them when there are fewer, so its value is at least the PEARL-style value in every round. Under the same cap, both decoders keep at most $B$ nodes.

Table~\ref{tab:pearl} shows that root-draw selection improves expected acceptance over the PEARL-style decoder by up to $14.2\%$ at $B=8$ and $7.0\%$ at $B=59$ under the same node cap. All confidence intervals lie above zero. We also find that root-draw selection retains larger trees on average than the PEARL-style decoder: $42$ to $52$ versus $26$ to $38$ nodes at $B=59$, and $6.6$ to $7.4$ versus $4.1$ to $5.8$ at $B=8$.

A variant closer to PEARL, whose drafter commits to one first token, keeps the drafter's most probable first token with its $B-1$ highest-ranked descendants and verifies the drawn token alone when the draw differs. Its value lies below the PEARL-style value in every run, by $0.5\%$ to $1.5\%$ at $B=8$ and $6.7\%$ to $9.1\%$ at $B=59$ with every interval excluding zero, so Table~\ref{tab:pearl} compares root-draw selection with the stronger of the two.

Table~\ref{tab:pearl} also compares root-draw selection with a PEARL-style decoder whose pre-draw budget is raised, separately in each run, until its pruned trees hold as many nodes on average as root-draw selection's. The raised budget mixes the two nearest whole budgets and comes to $81$ to $96$ candidates at $B=59$ and $10.4$ to $14.0$ at $B=8$. The pruned trees then hold more than $B$ nodes in an expected $38\%$ to $53\%$ of rounds at $B=59$ and $40\%$ to $56\%$ at $B=8$, while root-draw selection's trees never do.

After mean tree sizes are matched, root-draw selection still improves expected acceptance by up to $4.8\%$ at $B=8$ and $2.4\%$ at $B=59$. Every confidence interval remains above zero when matching is repeated in each resample. The draw is the drafter's most probable first token in an expected $52\%$ to $76\%$ of rounds. At $B=59$ the pruned trees hold $66$ to $74$ nodes on average after such a draw, more than root-draw selection's $59$, and only $12$ to $15$ after a draw of another first-level token, where root-draw selection keeps $39$ to $49$. At $B=8$ the pruned trees hold $9.2$ to $11.2$ and $2.1$ to $2.4$ nodes after the two kinds of draw, while root-draw selection's trees hold $8$ after either.

\begin{table*}[t]
\centering\small
\begin{tabular*}{\textwidth}{@{\extracolsep{\fill}}lrrrrrr@{}}
\toprule
& \multicolumn{3}{c}{$B=8$} & \multicolumn{3}{c}{$B=59$} \\
\cmidrule(lr){2-4}\cmidrule(lr){5-7}
Target & Gain, \% & Equal size, \% & Share, \% & Gain, \% & Equal size, \% & Share, \% \\
\midrule
\rowcolor{tablegroup}\multicolumn{7}{c}{Greedy decoding} \\
Llama-3.1-8B & 10.0 [9.4, 10.5] & 2.9 [2.6, 3.3] & 71 & 3.5 [3.3, 3.7] & 0.9 [0.8, 1.0] & 64 \\
Qwen3-4B & 12.8 [12.4, 13.3] & 4.5 [4.2, 4.8] & 55 & 6.7 [6.5, 6.9] & 2.4 [2.2, 2.5] & 60 \\
Qwen3-8B & 13.8 [13.4, 14.2] & 4.4 [4.0, 4.7] & 48 & 6.7 [6.5, 6.9] & 2.2 [2.1, 2.4] & 47 \\
Qwen2.5-7B & 10.7 [10.2, 11.2] & 4.1 [3.8, 4.4] & 55 & 4.2 [4.0, 4.4] & 1.3 [1.2, 1.4] & 46 \\
Qwen2.5-14B & 11.1 [10.6, 11.6] & 4.0 [3.7, 4.4] & 59 & 4.2 [4.0, 4.4] & 1.2 [1.1, 1.3] & 51 \\
Qwen2.5-32B & 10.6 [10.1, 11.1] & 3.9 [3.6, 4.2] & 58 & 4.1 [3.9, 4.3] & 1.2 [1.1, 1.4] & 51 \\
\rowcolor{tablegroup}\multicolumn{7}{c}{Sampling at temperature $1$} \\
Llama-3.1-8B & 12.2 [11.7, 12.7] & 3.1 [2.8, 3.4] & 57 & 4.7 [4.5, 5.0] & 1.4 [1.3, 1.5] & 51 \\
Qwen3-4B & 13.1 [12.6, 13.5] & 4.7 [4.4, 4.9] & 53 & 6.8 [6.6, 7.0] & 2.4 [2.2, 2.5] & 57 \\
Qwen3-8B & 14.2 [13.8, 14.6] & 4.8 [4.5, 5.0] & 45 & 7.0 [6.8, 7.2] & 2.3 [2.2, 2.5] & 44 \\
Qwen2.5-7B & 12.6 [12.1, 13.0] & 4.8 [4.6, 5.1] & 48 & 5.2 [5.1, 5.4] & 1.8 [1.7, 1.9] & 42 \\
Qwen2.5-14B & 13.0 [12.5, 13.4] & 4.7 [4.4, 4.9] & 53 & 5.3 [5.2, 5.5] & 1.8 [1.7, 1.9] & 48 \\
Qwen2.5-32B & 12.2 [11.7, 12.6] & 4.5 [4.2, 4.8] & 52 & 5.2 [5.0, 5.3] & 1.8 [1.7, 1.9] & 48 \\
\bottomrule
\end{tabular*}
\caption{Gain of root-draw selection over the PEARL-style decoder. In Equal size, it preselects more candidates until mean tree sizes are equal. Brackets give $95\%$ intervals. Share is the part of root-draw selection's gain over draft selection that Gain makes up.}
\label{tab:pearl}
\end{table*}

\begin{table*}[t]
\centering\small
\begin{tabular*}{\textwidth}{@{\extracolsep{\fill}}lrrrr@{}}
\toprule
& \multicolumn{2}{c}{Reference budget $B=59$} & \multicolumn{2}{c}{Reference budget $B=8$} \\
\cmidrule(lr){2-3}\cmidrule(lr){4-5}
Target & Draft reaches root & Root reaches draft & Draft reaches root & Root reaches draft \\
\midrule
\rowcolor{tablegroup}\multicolumn{5}{c}{Greedy decoding} \\
Llama-3.1-8B & 94 [92, 96] & 37 [36, 37] & 13 [12, 13] & 7 [7, 7] \\
Qwen3-4B & 118 [116, 121] & 26 [25, 26] & 20 [20, 21] & 4 [4, 4] \\
Qwen3-8B & 130 [127, 134] & 24 [23, 25] & 22 [21, 23] & 4 [4, 4] \\
Qwen2.5-7B & 105 [103, 107] & 32 [32, 33] & 15 [15, 16] & 6 [6, 6] \\
Qwen2.5-14B & 99 [97, 101] & 34 [34, 35] & 14 [14, 15] & 6 [6, 6] \\
Qwen2.5-32B & 100 [98, 102] & 34 [33, 35] & 14 [14, 15] & 6 [6, 6] \\
\rowcolor{tablegroup}\multicolumn{5}{c}{Sampling at temperature $1$} \\
Llama-3.1-8B & 82 [81, 83] & 43 [42, 43] & 11 [11, 11] & 7 [7, 7] \\
Qwen3-4B & 111 [109, 113] & 27 [27, 28] & 18 [18, 19] & 4 [4, 4] \\
Qwen3-8B & 121 [119, 124] & 26 [25, 27] & 20 [20, 21] & 4 [4, 4] \\
Qwen2.5-7B & 92 [91, 94] & 37 [37, 38] & 13 [13, 14] & 6 [6, 6] \\
Qwen2.5-14B & 88 [87, 89] & 40 [39, 40] & 13 [12, 13] & 6 [6, 6] \\
Qwen2.5-32B & 87 [86, 88] & 40 [40, 41] & 12 [12, 13] & 6 [6, 6] \\
\bottomrule
\end{tabular*}
\caption{Smallest budget at which draft selection reaches root selection's expected accepted draft tokens at the reference budget, and at which root selection reaches draft selection's. Brackets give $95\%$ prompt-bootstrap intervals.}
\label{tab:nodes}
\end{table*}

\begin{table*}[t]
\centering\small
\begin{tabular*}{\textwidth}{@{\extracolsep{\fill}}lrrrrrrr@{}}
\toprule
& & \multicolumn{3}{c}{Root selection} & \multicolumn{3}{c}{Root-draw selection} \\
\cmidrule(lr){3-5}\cmidrule(lr){6-8}
Target & EAGLE-3 & Accepted & Gain, \% & Offline, \% & Accepted & Gain, \% & Offline, \% \\
\midrule
\rowcolor{tablegroup}\multicolumn{8}{c}{Greedy decoding} \\
Llama-3.1-8B & 4.509 & 4.721 & 4.7 [3.6, 5.9] & 3.7 & 4.842 & 7.4 [6.3, 8.4] & 5.6 \\
Qwen3-4B & 2.340 & 2.507 & 7.1 [6.5, 7.8] & 7.5 & 2.630 & 12.4 [11.6, 13.2] & 11.8 \\
Qwen3-8B & 2.221 & 2.402 & 8.2 [7.4, 8.9] & 8.0 & 2.606 & 17.4 [16.4, 18.3] & 15.4 \\
Qwen2.5-7B & 3.293 & 3.463 & 5.2 [4.2, 6.1] & 4.8 & 3.683 & 11.9 [10.7, 13.0] & 9.5 \\
Qwen2.5-14B & 3.486 & 3.659 & 5.0 [4.2, 5.8] & 4.6 & 3.815 & 9.4 [8.6, 10.3] & 8.5 \\
Qwen2.5-32B & 3.489 & 3.635 & 4.2 [3.5, 4.9] & 4.5 & 3.811 & 9.2 [8.4, 10.1] & 8.4 \\
\rowcolor{tablegroup}\multicolumn{8}{c}{Sampling at temperature $1$} \\
Llama-3.1-8B & 3.534 & 3.606 & 2.1 [$-$2.4, 7.1] & 3.0 & 3.964 & 12.2 [8.0, 17.0] & 9.7 \\
Qwen3-4B & 2.255 & 2.413 & 7.0 [6.0, 8.0] & 7.0 & 2.574 & 14.1 [13.2, 15.1] & 12.7 \\
Qwen3-8B & 2.082 & 2.257 & 8.4 [7.1, 9.7] & 7.7 & 2.508 & 20.4 [19.0, 21.8] & 17.5 \\
Qwen2.5-7B & 2.749 & 2.886 & 5.0 [2.0, 7.9] & 4.4 & 3.235 & 17.7 [15.2, 20.4] & 13.4 \\
Qwen2.5-14B & 2.883 & 2.983 & 3.5 [1.7, 5.4] & 4.0 & 3.288 & 14.0 [12.1, 16.1] & 11.9 \\
Qwen2.5-32B & 2.929 & 3.037 & 3.7 [1.6, 5.9] & 3.8 & 3.329 & 13.7 [12.0, 15.3] & 11.5 \\
\bottomrule
\end{tabular*}
\caption{Online decoding at $B=59$. Accepted is the realized mean acceptance, and EAGLE-3 that of the recorded runs. Gain over EAGLE-3 has $95\%$ prompt-bootstrap intervals in brackets, and Offline is the gain over draft selection at the recorded contexts.}
\label{tab:online}
\end{table*}

\begin{table*}[t]
\centering\small
\begin{tabular*}{\textwidth}{@{\extracolsep{\fill}}lrrrrrrrr@{}}
\toprule
& \multicolumn{2}{c}{First tokens differ, \%} & \multicolumn{2}{c}{Oracle advantage} & \multicolumn{2}{c}{Root recovery, \%} & \multicolumn{2}{c}{Depth-two recovery, \%} \\
\cmidrule(lr){2-3}\cmidrule(lr){4-5}\cmidrule(lr){6-7}\cmidrule(lr){8-9}
Target & Rounds & Root gain & Agree & Differ & \cellcolor{tableroot}Agree & \cellcolor{tableroot}Differ & Agree & Differ \\
\midrule
\rowcolor{tablegroup}\multicolumn{9}{c}{Greedy decoding} \\
Llama-3.1-8B & 24 & 60 & 0.371 & 0.448 & \cellcolor{tableroot}24 & \cellcolor{tableroot}92 & 61 & 96 \\
Qwen3-4B & 36 & 75 & 0.371 & 0.395 & \cellcolor{tableroot}19 & \cellcolor{tableroot}93 & 62 & 97 \\
Qwen3-8B & 45 & 78 & 0.352 & 0.333 & \cellcolor{tableroot}20 & \cellcolor{tableroot}94 & 61 & 97 \\
Qwen2.5-7B & 34 & 69 & 0.340 & 0.349 & \cellcolor{tableroot}22 & \cellcolor{tableroot}92 & 62 & 96 \\
Qwen2.5-14B & 31 & 67 & 0.358 & 0.364 & \cellcolor{tableroot}22 & \cellcolor{tableroot}93 & 62 & 97 \\
Qwen2.5-32B & 31 & 68 & 0.345 & 0.375 & \cellcolor{tableroot}21 & \cellcolor{tableroot}93 & 61 & 97 \\
\rowcolor{tablegroup}\multicolumn{9}{c}{Sampling at temperature $1$} \\
Llama-3.1-8B & 31 & 57 & 0.371 & 0.320 & \cellcolor{tableroot}18 & \cellcolor{tableroot}62 & 45 & 77 \\
Qwen3-4B & 37 & 75 & 0.375 & 0.374 & \cellcolor{tableroot}17 & \cellcolor{tableroot}86 & 60 & 94 \\
Qwen3-8B & 46 & 78 & 0.351 & 0.316 & \cellcolor{tableroot}19 & \cellcolor{tableroot}85 & 59 & 94 \\
Qwen2.5-7B & 37 & 68 & 0.348 & 0.306 & \cellcolor{tableroot}17 & \cellcolor{tableroot}72 & 51 & 85 \\
Qwen2.5-14B & 34 & 64 & 0.354 & 0.321 & \cellcolor{tableroot}18 & \cellcolor{tableroot}68 & 48 & 82 \\
Qwen2.5-32B & 33 & 63 & 0.354 & 0.318 & \cellcolor{tableroot}17 & \cellcolor{tableroot}67 & 49 & 82 \\
\bottomrule
\end{tabular*}
\caption{Selection at $B=59$ split by whether the drafter's most probable first token is the target's most probable token. Oracle advantage is in expected accepted draft tokens, and recovery is the share of it that each rule attains.}
\label{tab:agreement}
\end{table*}

\subsection{Online decoding}
\label{app:online}
The online decoders of Section~\ref{sec:selection} that take the root row from a separate target pass run on all six targets, with the checkpoints, prompts, pools, output limit and seeding of Appendix~\ref{app:setup}. In each round, the drafter grows its pool from the committed context, and a first target pass scores the root alone to give the root row. The pool does not depend on this row, so running the first pass while the drafter works would leave every acceptance value unchanged, as the pipelined runs of Appendix~\ref{app:pipe} confirm for the standalone drafters.

The rule selects the tree with that row, and a second pass scores the root and the tree under the tree mask. Exhaustive target sampling then traverses the tree, taking its first draw from the root row. Root-draw selection makes this draw before it selects the tree, and the PEARL-style decoder makes it before it cuts draft selection's $59$-node tree to the part beneath the drawn token, or to the drawn token alone when that tree lacks it. The key-value cache keeps the root and the accepted path, and the continuation becomes the next root.

The two passes compute the root row in batches of different sizes, and the two computations share their argmax in $98.6\%$ to $99.6\%$ of rounds and, in the PEARL-style runs, differ by a mean total variation of $0.0041$ to $0.0129$. For Qwen3-4B and Qwen2.5-7B, whose recorded runs and the runs above all used RTX A6000 GPUs, the matched pairs of Section~\ref{sec:rows} share their argmax more often than these two computations in every decoder at both temperatures, and they differ by a smaller mean total variation than the two computations in the PEARL-style runs.

The two rows of a matched pair have the same token, position and key-value cache, and the replay of Appendix~\ref{app:setup}, run with the recorded batches on the capture's GPU type, reproduces the recorded root rows exactly, so the two rows differ only because different batches compute them.

Table~\ref{tab:online} gives the realized acceptance of root and root-draw selection, counting the drawn token under root-draw selection, and Table~\ref{tab:online-pearl} that of root-draw selection and the PEARL-style decoder behind Table~\ref{tab:online-main}. Realized acceptance divides the accepted tokens of all rounds by the number of rounds. Each decoder processes one prompt at a time, with a generator seeded for each prompt, run and decoder. An online run and the recorded run follow different outputs after their first differing token, so each resample draws prompts jointly for both.

Under greedy decoding, the two outputs agree up to a first token where bfloat16 rounding changes an argmax, and $38\%$ to $59\%$ of the emitted tokens precede it. On this shared text, a round of either decoder that starts where a recorded round starts has the recorded context, and its acceptance equals its rule's value on the recorded rows in $97.2\%$ to $98.5\%$ of such rounds. On the same text, root-draw decoders start $6.8\%$ to $12.9\%$ of their rounds where \mbox{EAGLE-3} accepted no draft token, against $9.9\%$ to $17.8\%$ of \mbox{EAGLE-3}'s rounds, a relative reduction of $25\%$ to $31\%$.

At temperature $1$ the outputs diverge within the first rounds, and the sampling noise of two separate outputs enters the comparison. A second run of each root decoder at temperature $1$ therefore scores the union of its tree and EAGLE-3's draft-ranked tree, and the union gives the expected acceptance of both trees at the same contexts. At these contexts, the root gain over draft ranking is $3.0\%$ for Llama-3.1-8B, $7.1\%$ for Qwen3-4B, $7.5\%$ for Qwen3-8B, $4.2\%$ for Qwen2.5-7B, $4.1\%$ for Qwen2.5-14B and $3.9\%$ for Qwen2.5-32B, with every interval above $2.8\%$. Each such run's realized mean acceptance is within $0.01$ tokens of its trees' expected acceptance.

A further run of root-draw selection on every target at both temperatures also scores, in each tree pass, the drawn token's $160$ highest-ranked pool descendants. The verifier still sees only root-draw selection's tree, and the extra prefixes leave its rows unchanged up to rounding. The run uses the seeds of the first root-draw run, but that rounding can make the two outputs differ. The target probabilities of these descendants give, at this run's contexts, the expected acceptance of the PEARL-style tree for every pre-draw budget up to $161$ candidates.

As in Table~\ref{tab:pearl}, the budget is raised until the mean tree sizes are equal, which takes $81$ to $95$ candidates, and the enlarged trees then hold more than $59$ nodes in $39\%$ to $54\%$ of rounds. Table~\ref{tab:online-matched} gives root-draw selection's gain over these trees, $1.0\%$ to $2.4\%$ against $0.9\%$ to $2.4\%$ at the recorded contexts, and with the budget matched again in every resample, each interval lies above zero. Each run's realized mean acceptance is within $0.02$ tokens of its trees' expected acceptance.

Root-draw selection and the PEARL-style decoder also run online with the same prompts and seeding under a cap of $8$ nodes, where the PEARL-style decoder cuts draft selection's $8$-node tree. In Table~\ref{tab:online-pearl-b8}, root-draw selection's realized gain is $10.7\%$ to $13.7\%$, with every interval above $9.8\%$, and each run's gain is within $0.8$ percentage points of its offline gain.

\begin{table*}[t]
\centering\small
\begin{tabular*}{\textwidth}{@{\extracolsep{\fill}}lrrrrr@{}}
\toprule
& \multicolumn{2}{c}{Accepted} & & & \\
\cmidrule(lr){2-3}
Target & PEARL-style & Root draw & Gain, \% & Offline, \% & Root TV \\
\midrule
\rowcolor{tablegroup}\multicolumn{6}{c}{Greedy decoding} \\
Llama-3.1-8B & 4.686 & 4.842 & 3.3 [2.6, 4.1] & 3.5 & 0.0041 \\
Qwen3-4B & 2.463 & 2.630 & 6.7 [6.2, 7.3] & 6.7 & 0.0057 \\
Qwen3-8B & 2.429 & 2.606 & 7.3 [6.6, 8.0] & 6.7 & 0.0063 \\
Qwen2.5-7B & 3.523 & 3.683 & 4.5 [3.6, 5.5] & 4.2 & 0.0111 \\
Qwen2.5-14B & 3.676 & 3.815 & 3.8 [3.1, 4.4] & 4.2 & 0.0064 \\
Qwen2.5-32B & 3.677 & 3.811 & 3.7 [3.0, 4.3] & 4.1 & 0.0055 \\
\rowcolor{tablegroup}\multicolumn{6}{c}{Sampling at temperature 1} \\
Llama-3.1-8B & 3.807 & 3.964 & 4.1 [1.7, 6.8] & 4.7 & 0.0053 \\
Qwen3-4B & 2.402 & 2.574 & 7.1 [6.4, 7.9] & 6.8 & 0.0058 \\
Qwen3-8B & 2.336 & 2.508 & 7.3 [6.3, 8.3] & 7.0 & 0.0064 \\
Qwen2.5-7B & 3.060 & 3.235 & 5.7 [4.1, 7.4] & 5.2 & 0.0128 \\
Qwen2.5-14B & 3.145 & 3.288 & 4.5 [3.0, 5.9] & 5.3 & 0.0074 \\
Qwen2.5-32B & 3.178 & 3.329 & 4.7 [3.4, 6.1] & 5.2 & 0.0062 \\
\bottomrule
\end{tabular*}
\caption{Online decoding at $B=59$. Accepted is realized mean acceptance. Gain compares root-draw selection with the PEARL-style decoder; brackets give $95\%$ intervals. Offline evaluates shared recorded contexts. Root TV compares the two passes' root rows by mean total variation.}
\label{tab:online-pearl}
\end{table*}

\begin{table*}[t]
\centering\small
\begin{tabular*}{\textwidth}{@{\extracolsep{\fill}}lrrrrr@{}}
\toprule
& \multicolumn{2}{c}{Accepted} & & & \\
\cmidrule(lr){2-3}
Target & PEARL-style & Root draw & Gain, \% & Offline, \% & Root TV \\
\midrule
\rowcolor{tablegroup}\multicolumn{6}{c}{Greedy decoding} \\
Llama-3.1-8B & 3.365 & 3.724 & 10.7 [9.9, 11.4] & 10.0 & 0.0043 \\
Qwen3-4B & 1.881 & 2.122 & 12.8 [12.2, 13.4] & 12.8 & 0.0057 \\
Qwen3-8B & 1.872 & 2.130 & 13.7 [13.1, 14.4] & 13.8 & 0.0063 \\
Qwen2.5-7B & 2.647 & 2.940 & 11.1 [10.1, 12.0] & 10.7 & 0.0113 \\
Qwen2.5-14B & 2.710 & 3.003 & 10.8 [10.1, 11.5] & 11.1 & 0.0068 \\
Qwen2.5-32B & 2.724 & 3.022 & 10.9 [10.3, 11.6] & 10.6 & 0.0055 \\
\rowcolor{tablegroup}\multicolumn{6}{c}{Sampling at temperature 1} \\
Llama-3.1-8B & 2.790 & 3.139 & 12.5 [10.4, 14.7] & 12.2 & 0.0053 \\
Qwen3-4B & 1.849 & 2.088 & 12.9 [12.1, 13.7] & 13.1 & 0.0061 \\
Qwen3-8B & 1.815 & 2.062 & 13.6 [12.7, 14.5] & 14.2 & 0.0063 \\
Qwen2.5-7B & 2.309 & 2.605 & 12.8 [11.2, 14.5] & 12.6 & 0.0129 \\
Qwen2.5-14B & 2.344 & 2.632 & 12.3 [10.8, 13.6] & 13.0 & 0.0074 \\
Qwen2.5-32B & 2.365 & 2.668 & 12.8 [11.6, 14.2] & 12.2 & 0.0063 \\
\bottomrule
\end{tabular*}
\caption{Online decoding at $B=8$. Gain compares root-draw selection with the PEARL-style decoder in realized acceptance; brackets give $95\%$ intervals. Offline evaluates shared recorded contexts. Root TV compares the two passes' root rows by mean total variation.}
\label{tab:online-pearl-b8}
\end{table*}

\begin{table*}[t]
\centering\small
\begin{tabular*}{\textwidth}{@{\extracolsep{\fill}}lrrrr@{}}
\toprule
& \multicolumn{2}{c}{Greedy decoding} & \multicolumn{2}{c}{Sampling at temperature $1$} \\
\cmidrule(lr){2-3}\cmidrule(lr){4-5}
Target & Online, \% & Offline, \% & Online, \% & Offline, \% \\
\midrule
Llama-3.1-8B & 1.0 [0.9, 1.1] & 0.9 [0.8, 1.0] & 1.4 [1.3, 1.5] & 1.4 [1.3, 1.5] \\
Qwen3-4B & 2.3 [2.1, 2.4] & 2.4 [2.2, 2.5] & 2.4 [2.3, 2.5] & 2.4 [2.2, 2.5] \\
Qwen3-8B & 2.1 [2.0, 2.3] & 2.2 [2.1, 2.4] & 2.3 [2.1, 2.4] & 2.3 [2.2, 2.5] \\
Qwen2.5-7B & 1.3 [1.2, 1.4] & 1.3 [1.2, 1.4] & 1.9 [1.8, 2.0] & 1.8 [1.7, 1.9] \\
Qwen2.5-14B & 1.2 [1.1, 1.3] & 1.2 [1.1, 1.3] & 1.7 [1.6, 1.8] & 1.8 [1.7, 1.9] \\
Qwen2.5-32B & 1.2 [1.1, 1.4] & 1.2 [1.1, 1.4] & 1.7 [1.6, 1.8] & 1.8 [1.7, 1.9] \\
\bottomrule
\end{tabular*}
\caption{Gain in expected acceptance of root-draw selection over the PEARL-style tree enlarged to equal mean size at $B=59$, with $95\%$ intervals. Online evaluates both trees at online contexts; Offline uses the shared recorded contexts.}
\label{tab:online-matched}
\end{table*}

\begin{table*}[t]
\centering\small
\setlength{\tabcolsep}{3pt}
\begin{tabular*}{\textwidth}{@{\extracolsep{\fill}}lrrrrrrrrrrrr@{}}
\toprule
& \multicolumn{6}{c}{$B=8$} & \multicolumn{6}{c}{$B=59$} \\
\cmidrule(lr){2-7}\cmidrule(lr){8-13}
& & \multicolumn{2}{c}{Calibrated} & \multicolumn{2}{c}{Reranked} & & & \multicolumn{2}{c}{Calibrated} & \multicolumn{2}{c}{Reranked} & \\
\cmidrule(lr){3-4}\cmidrule(lr){5-6}\cmidrule(lr){9-10}\cmidrule(lr){11-12}
Target & Draft & First & All & Base & Rows & \cellcolor{tableroot}Root & Draft & First & All & Base & Rows & \cellcolor{tableroot}Root \\
\midrule
\rowcolor{tablegroup}\multicolumn{13}{c}{Greedy decoding} \\
Llama-3.1-8B & 3.186 & 3.188 & 3.168 & 3.187 & 3.234 & \cellcolor{tableroot}3.568 & 4.509 & 4.510 & 4.493 & 4.509 & 4.536 & \cellcolor{tableroot}4.676 \\
Qwen3-4B & 1.664 & 1.667 & 1.708 & 1.674 & 1.722 & \cellcolor{tableroot}2.002 & 2.340 & 2.342 & 2.356 & 2.343 & 2.372 & \cellcolor{tableroot}2.516 \\
Qwen3-8B & 1.556 & 1.563 & 1.564 & 1.581 & 1.650 & \cellcolor{tableroot}1.919 & 2.221 & 2.229 & 2.223 & 2.241 & 2.272 & \cellcolor{tableroot}2.399 \\
Qwen2.5-7B & 2.359 & 2.359 & 2.348 & 2.360 & 2.399 & \cellcolor{tableroot}2.710 & 3.293 & 3.293 & 3.273 & 3.294 & 3.311 & \cellcolor{tableroot}3.450 \\
Qwen2.5-14B & 2.435 & 2.434 & 2.415 & 2.436 & 2.475 & \cellcolor{tableroot}2.790 & 3.486 & 3.485 & 3.443 & 3.482 & 3.502 & \cellcolor{tableroot}3.645 \\
Qwen2.5-32B & 2.464 & 2.463 & 2.447 & 2.458 & 2.490 & \cellcolor{tableroot}2.812 & 3.489 & 3.488 & 3.449 & 3.488 & 3.502 & \cellcolor{tableroot}3.646 \\
\rowcolor{tablegroup}\multicolumn{13}{c}{Sampling at temperature $1$} \\
Llama-3.1-8B & 2.444 & 2.447 & 2.437 & 2.451 & 2.482 & \cellcolor{tableroot}2.655 & 3.528 & 3.527 & 3.509 & 3.527 & 3.544 & \cellcolor{tableroot}3.634 \\
Qwen3-4B & 1.600 & 1.606 & 1.648 & 1.613 & 1.656 & \cellcolor{tableroot}1.895 & 2.253 & 2.256 & 2.276 & 2.257 & 2.283 & \cellcolor{tableroot}2.411 \\
Qwen3-8B & 1.449 & 1.459 & 1.462 & 1.475 & 1.537 & \cellcolor{tableroot}1.762 & 2.082 & 2.095 & 2.090 & 2.105 & 2.133 & \cellcolor{tableroot}2.241 \\
Qwen2.5-7B & 1.922 & 1.928 & 1.940 & 1.931 & 1.958 & \cellcolor{tableroot}2.161 & 2.746 & 2.753 & 2.746 & 2.755 & 2.768 & \cellcolor{tableroot}2.866 \\
Qwen2.5-14B & 1.996 & 2.000 & 2.005 & 2.002 & 2.025 & \cellcolor{tableroot}2.221 & 2.895 & 2.899 & 2.877 & 2.901 & 2.914 & \cellcolor{tableroot}3.010 \\
Qwen2.5-32B & 2.039 & 2.044 & 2.054 & 2.045 & 2.066 & \cellcolor{tableroot}2.256 & 2.921 & 2.926 & 2.911 & 2.924 & 2.937 & \cellcolor{tableroot}3.033 \\
\bottomrule
\end{tabular*}
\caption{Expected accepted draft tokens after cross-fitting calibrated or reranked draft factors on complementary prompt halves. Base uses draft and contextual features; Rows adds retained rows from other addresses. Blue cells mark selection with the exact root row.}
\label{tab:calib}
\end{table*}

\begin{table*}[t]
\centering\small
\begin{tabular*}{\textwidth}{@{\extracolsep{\fill}}lrrrrrr@{}}
\toprule
& & & \multicolumn{2}{c}{Late rounds} & \multicolumn{2}{c}{Whole outputs} \\
\cmidrule(lr){4-5}\cmidrule(lr){6-7}
Target & Late, \% & Rounds & $B=59$ & $B=8$ & $B=59$ & $B=8$ \\
\midrule
\rowcolor{tablegroup}\multicolumn{7}{c}{Greedy decoding} \\
Llama-3.1-8B & 41 & 7{,}374 & 2.7 [2.4, 3.1] & 8.5 [7.3, 9.8] & 3.5 [3.3, 3.7] & 11.0 [10.3, 11.7] \\
Qwen3-8B & 48 & 19{,}346 & 7.6 [7.2, 8.1] & 22.6 [20.9, 24.1] & 7.8 [7.6, 8.0] & 23.0 [22.2, 23.9] \\
\rowcolor{tablegroup}\multicolumn{7}{c}{Sampling at temperature $1$} \\
Llama-3.1-8B & 41 & 12{,}334 & 2.8 [2.6, 3.1] & 7.6 [6.9, 8.4] & 3.0 [2.9, 3.1] & 8.4 [8.0, 8.9] \\
Qwen3-8B & 48 & 21{,}850 & 7.3 [7.0, 7.6] & 21.4 [20.4, 22.5] & 7.4 [7.3, 7.6] & 21.5 [20.9, 22.2] \\
\bottomrule
\end{tabular*}
\caption{Root selection's gains over draft ranking under a $1{,}024$-token limit, in percent, with $95\%$ prompt-bootstrap intervals in brackets. Late is the share of outputs with rounds starting once the default output limit is reached; Rounds counts these rounds.}
\label{tab:long}
\end{table*}

\subsection{Tasks, weighting and targets}
\label{app:taskweights}
Round averages weight long generations more heavily. Averaging the rounds of each prompt first, so that every prompt weighs the same, gives relative root gains of $2.9\%$ to $7.9\%$ at $B=59$ and $8.6\%$ to $23.4\%$ at $B=8$, with every interval above zero. Table~\ref{tab:equal-task} instead averages the seven tasks equally, with intervals from $1{,}000$ resamples of prompts within each task. With prompts resampled within each task, every task interval of the root and root-draw gains lies above zero at both budgets in every run.

Figure~\ref{fig:recovery-grid} shows the recovered share of the oracle advantage for each target at the budgets evaluated in Figure~\ref{fig:depth}, $B=2$, $4$, $8$, $12$, $16$, $24$, $32$, $48$, $59$, $80$, $100$, $150$ and $200$.

\subsection{Longer outputs}
\label{app:long}
Llama-3.1-8B and Qwen3-8B also decode all $644$ prompts under a $1{,}024$-token limit at both temperatures, on RTX A6000 GPUs and with the code and settings of Appendix~\ref{app:setup} otherwise unchanged. A late round starts with an output length at or above the default limit, so the default-length runs contain none. Late rounds occur in $41\%$ to $48\%$ of the outputs, and HumanEval prompts account for about half of these outputs. At most $2.5\%$ of all outputs reach the limit.

Table~\ref{tab:long} gives the late-round and whole-output gains of root selection over draft ranking. Whole-output gains remain within $0.3$ percentage points of the default-length runs at $B=59$ and within $1.1$ at $B=8$. Root-draw selection also improves late-round expected acceptance over draft ranking at both budgets, with no interval below $3.4\%$.

\subsection{Pool size}
Table~\ref{tab:pool} restricts every rule to the $M$ highest-ranked candidates of the pool and reports the oracle advantage at $B=59$ with the share that root selection recovers. The advantage grows with the pool, reaching $0.12$ to $0.19$ tokens at $M=100$ and $0.33$ to $0.39$ at $M=710$, and root selection recovers $44\%$ to $67\%$ of it at $M=100$.

\begin{table*}[t]
\centering\small
\begin{tabular*}{\textwidth}{@{\extracolsep{\fill}}lrrrrrrr@{}}
\toprule
& & \multicolumn{6}{c}{Coverage, \%, with $E$ scored frontier candidates} \\
\cmidrule(lr){3-8}
Target & Frontier size & $E=10$ & $E=20$ & $E=40$ & $E=80$ & $E=160$ & All \\
\midrule
\rowcolor{tablegroup}\multicolumn{8}{c}{Greedy decoding} \\
Llama-3.1-8B & 407 & 5.9 & 10.4 & 16.9 & 23.7 & 29.4 & 32.8 \\
Qwen3-4B & 388 & 4.4 & 7.9 & 13.7 & 21.4 & 28.5 & 34.5 \\
Qwen3-8B & 381 & 4.0 & 7.1 & 11.9 & 18.2 & 24.6 & 30.4 \\
Qwen2.5-7B & 416 & 4.9 & 8.5 & 13.4 & 19.1 & 24.5 & 28.5 \\
Qwen2.5-14B & 416 & 5.1 & 9.2 & 14.9 & 21.4 & 26.7 & 30.2 \\
Qwen2.5-32B & 417 & 5.1 & 8.9 & 14.3 & 20.6 & 25.9 & 29.7 \\
\rowcolor{tablegroup}\multicolumn{8}{c}{Sampling at temperature $1$} \\
Llama-3.1-8B & 399 & 5.2 & 9.3 & 15.3 & 22.2 & 28.1 & 32.1 \\
Qwen3-4B & 387 & 4.3 & 7.8 & 13.4 & 20.8 & 28.1 & 33.9 \\
Qwen3-8B & 380 & 3.8 & 6.9 & 11.6 & 17.6 & 23.8 & 29.5 \\
Qwen2.5-7B & 417 & 4.3 & 7.7 & 12.7 & 18.6 & 23.9 & 28.4 \\
Qwen2.5-14B & 415 & 4.8 & 8.4 & 13.7 & 20.2 & 26.0 & 30.3 \\
Qwen2.5-32B & 418 & 4.8 & 8.3 & 13.6 & 19.9 & 25.7 & 30.1 \\
\bottomrule
\end{tabular*}
\caption{Coverage of exhaustive traversal when the pass also scores the first $E$ frontier candidates in draft order. Frontier size is the mean number of pool candidates outside the tree whose parents lie in it.}
\label{tab:frontier}
\end{table*}

\subsection{First-token agreement}
\label{app:agreement}
Table~\ref{tab:agreement} gives the first-token split of Section~\ref{sec:mechanism} for each run. The first tokens agree when the drafter's most probable first token is the target's most probable token at the current root. In those rounds, rows through depth two recover $45\%$ to $62\%$ of the oracle advantage at $B=59$. In every run, the rounds in which the first tokens differ carry a larger share of the root gain than of the rounds.

The root gain is also larger where the target is uncertain. At $B=59$, rounds in the upper two quintiles of the root row's softmax entropy carry $1.6$ to $12.8$ times the gain of the lower two. These quintiles are taken over the rounds of each run, and under greedy decoding the mean root gain rises from each quintile to the next in every run.

\begin{table*}[t]
\centering\small
\begin{tabular*}{\textwidth}{@{\extracolsep{\fill}}lrrrrr@{}}
\toprule
& & \multicolumn{1}{c}{Tree rows} & \multicolumn{3}{c}{All scored rows} \\
\cmidrule(lr){3-3}\cmidrule(lr){4-6}
Target & Successor rounds & Token, \% & Token, \% & Address, \% & Token only, \% \\
\midrule
\rowcolor{tablegroup}\multicolumn{6}{c}{Greedy decoding} \\
Llama-3.1-8B & 19,340 & 41.7 & 61.1 & 25.1 & 36.0 \\
Qwen3-4B & 35,971 & 27.8 & 47.4 & 22.9 & 24.5 \\
Qwen3-8B & 36,856 & 29.1 & 46.9 & 20.4 & 26.5 \\
Qwen2.5-7B & 27,251 & 31.3 & 46.7 & 20.5 & 26.3 \\
Qwen2.5-14B & 24,556 & 30.9 & 47.9 & 22.8 & 25.2 \\
Qwen2.5-32B & 24,712 & 30.5 & 46.8 & 21.9 & 25.0 \\
\rowcolor{tablegroup}\multicolumn{6}{c}{Sampling at temperature $1$} \\
Llama-3.1-8B & 24,033 & 36.2 & 55.0 & 23.6 & 31.4 \\
Qwen3-4B & 36,639 & 26.9 & 46.5 & 22.6 & 23.9 \\
Qwen3-8B & 38,758 & 28.6 & 45.9 & 19.7 & 26.2 \\
Qwen2.5-7B & 31,693 & 28.4 & 44.1 & 20.1 & 24.0 \\
Qwen2.5-14B & 29,387 & 28.5 & 45.2 & 21.7 & 23.5 \\
Qwen2.5-32B & 28,998 & 28.1 & 44.3 & 20.8 & 23.5 \\
\bottomrule
\end{tabular*}
\caption{Successor rounds whose root's final token or root address appears among the prefixes scored in the previous round. Tree rows are EAGLE-3's $60$ verified prefixes, and all scored rows are the roughly $170$ prefixes of our verification pass.}
\label{tab:addr}
\end{table*}

\begin{table*}[t]
\centering\small
\begin{tabular*}{\textwidth}{@{\extracolsep{\fill}}lrrrrrrr@{}}
\toprule
& \multicolumn{4}{c}{Matched address} & \multicolumn{3}{c}{Matched token only} \\
\cmidrule(lr){2-5}\cmidrule(lr){6-8}
Target & Pairs & Mean TV & 99th pct.\ TV & Argmax agree, \% & Pairs & Mean TV & Argmax agree, \% \\
\midrule
\rowcolor{tablegroup}\multicolumn{8}{c}{Greedy decoding} \\
Llama-3.1-8B & 4,862 & 0.0036 & 0.0316 & 99.7 & 54,630 & 0.416 & 59.1 \\
Qwen3-4B & 8,254 & 0.0043 & 0.0603 & 99.5 & 106,260 & 0.390 & 61.3 \\
Qwen3-8B & 7,505 & 0.0045 & 0.0602 & 99.4 & 96,948 & 0.378 & 62.7 \\
Qwen2.5-7B & 5,575 & 0.0087 & 0.0734 & 99.2 & 62,228 & 0.457 & 54.1 \\
Qwen2.5-14B & 5,592 & 0.0069 & 0.0576 & 99.2 & 55,155 & 0.435 & 56.1 \\
Qwen2.5-32B & 5,410 & 0.0061 & 0.0575 & 99.3 & 56,493 & 0.440 & 55.7 \\
\rowcolor{tablegroup}\multicolumn{8}{c}{Sampling at temperature $1$} \\
Llama-3.1-8B & 5,671 & 0.0043 & 0.0322 & 99.5 & 59,114 & 0.407 & 59.2 \\
Qwen3-4B & 8,290 & 0.0045 & 0.0618 & 99.5 & 104,419 & 0.385 & 61.7 \\
Qwen3-8B & 7,634 & 0.0045 & 0.0591 & 99.5 & 99,252 & 0.388 & 61.5 \\
Qwen2.5-7B & 6,386 & 0.0091 & 0.0763 & 99.1 & 65,157 & 0.470 & 52.7 \\
Qwen2.5-14B & 6,373 & 0.0076 & 0.0598 & 99.3 & 59,350 & 0.444 & 54.9 \\
Qwen2.5-32B & 6,037 & 0.0066 & 0.0555 & 99.4 & 58,872 & 0.431 & 56.5 \\
\bottomrule
\end{tabular*}
\caption{Agreement between retained rows and the next round's root row. Matched rows share the complete emitted prefix, while token-only rows share only its final token. Total variation uses softmax probabilities.}
\label{tab:rowvalidity}
\end{table*}

\begin{table*}[t]
\centering\small
\begin{tabular*}{\textwidth}{@{\extracolsep{\fill}}llrrrrrr@{}}
\toprule
& & \multicolumn{3}{c}{$B=8$} & \multicolumn{3}{c}{$B=59$} \\
\cmidrule(lr){3-5}\cmidrule(lr){6-8}
Target & Drafter & Draft & Root, \% & Root draw, \% & Draft & Root, \% & Root draw, \% \\
\midrule
\rowcolor{tablegroup}\multicolumn{8}{c}{Greedy decoding} \\
Llama-3.1-8B & Llama-3.2-1B & 4.843 & 11.3 [10.6, 12.0] & 9.8 [9.2, 10.4] & 6.706 & 2.7 [2.5, 2.9] & 2.6 [2.4, 2.8] \\
Qwen3-8B & Qwen3-0.6B & 3.556 & 16.8 [15.8, 17.8] & 13.2 [12.5, 14.0] & 5.234 & 5.0 [4.6, 5.3] & 4.6 [4.3, 4.9] \\
\rowcolor{tablegroup}\multicolumn{8}{c}{Sampling at temperature 1} \\
Llama-3.1-8B & Llama-3.2-1B & 3.391 & 6.2 [5.8, 6.7] & 13.4 [12.8, 14.0] & 5.045 & 2.4 [2.3, 2.6] & 4.8 [4.5, 5.0] \\
Qwen3-8B & Qwen3-0.6B & 3.244 & 12.3 [11.6, 13.1] & 14.2 [13.5, 14.9] & 4.822 & 4.5 [4.2, 4.7] & 5.4 [5.2, 5.7] \\
\bottomrule
\end{tabular*}
\caption{Standalone drafters in pools of EAGLE-3's shape. Draft is draft selection's expected acceptance. Root is root selection's gain over draft selection, and Root draw root-draw selection's gain over the PEARL-style decoder, with $95\%$ prompt-bootstrap intervals in brackets.}
\label{tab:lmdraft}
\end{table*}

\begin{table*}[t]
\centering\footnotesize
\setlength{\tabcolsep}{3pt}
\begin{tabular*}{\textwidth}{@{\extracolsep{\fill}}lrrrrr@{}}
\toprule
& & \multicolumn{4}{c}{Gain over draft selection} \\
\cmidrule(lr){3-6}
Target & $B$ & \cellcolor{tableroot}Exact & Cache & Same token & Preceding \\
\midrule
\rowcolor{tablegroup}\multicolumn{6}{c}{Greedy decoding} \\
Llama-3.1-8B & 8 & \cellcolor{tableroot}0.275 [0.259, 0.291] & $-$0.595 [$-$0.632, $-$0.559] & $-$0.220 [$-$0.243, $-$0.198] & $-$1.760 [$-$1.819, $-$1.695] \\
 & 59 & \cellcolor{tableroot}0.117 [0.110, 0.125] & $-$0.325 [$-$0.348, $-$0.302] & $-$0.111 [$-$0.124, $-$0.099] & $-$0.803 [$-$0.838, $-$0.770] \\
Qwen3-4B & 8 & \cellcolor{tableroot}0.315 [0.306, 0.323] & $-$0.289 [$-$0.304, $-$0.275] & $-$0.040 [$-$0.047, $-$0.033] & $-$0.936 [$-$0.954, $-$0.919] \\
 & 59 & \cellcolor{tableroot}0.166 [0.161, 0.172] & $-$0.185 [$-$0.195, $-$0.176] & $-$0.022 [$-$0.026, $-$0.017] & $-$0.519 [$-$0.532, $-$0.506] \\
Qwen3-8B & 8 & \cellcolor{tableroot}0.334 [0.325, 0.342] & $-$0.213 [$-$0.227, $-$0.199] & $-$0.015 [$-$0.023, $-$0.008] & $-$0.946 [$-$0.968, $-$0.922] \\
 & 59 & \cellcolor{tableroot}0.167 [0.162, 0.172] & $-$0.139 [$-$0.148, $-$0.130] & $-$0.007 [$-$0.012, $-$0.003] & $-$0.651 [$-$0.669, $-$0.634] \\
Qwen2.5-7B & 8 & \cellcolor{tableroot}0.285 [0.273, 0.296] & $-$0.442 [$-$0.466, $-$0.419] & $-$0.129 [$-$0.144, $-$0.116] & $-$1.240 [$-$1.278, $-$1.201] \\
 & 59 & \cellcolor{tableroot}0.124 [0.119, 0.130] & $-$0.268 [$-$0.283, $-$0.253] & $-$0.069 [$-$0.077, $-$0.061] & $-$0.599 [$-$0.623, $-$0.576] \\
Qwen2.5-14B & 8 & \cellcolor{tableroot}0.280 [0.268, 0.292] & $-$0.470 [$-$0.494, $-$0.447] & $-$0.143 [$-$0.158, $-$0.129] & $-$1.324 [$-$1.357, $-$1.287] \\
 & 59 & \cellcolor{tableroot}0.122 [0.116, 0.128] & $-$0.306 [$-$0.324, $-$0.288] & $-$0.078 [$-$0.087, $-$0.068] & $-$0.716 [$-$0.741, $-$0.691] \\
Qwen2.5-32B & 8 & \cellcolor{tableroot}0.273 [0.260, 0.285] & $-$0.479 [$-$0.504, $-$0.454] & $-$0.148 [$-$0.162, $-$0.133] & $-$1.387 [$-$1.427, $-$1.346] \\
 & 59 & \cellcolor{tableroot}0.119 [0.113, 0.125] & $-$0.291 [$-$0.309, $-$0.274] & $-$0.073 [$-$0.082, $-$0.064] & $-$0.712 [$-$0.738, $-$0.685] \\
\rowcolor{tablegroup}\multicolumn{6}{c}{Sampling at temperature $1$} \\
Llama-3.1-8B & 8 & \cellcolor{tableroot}0.210 [0.199, 0.221] & $-$0.411 [$-$0.440, $-$0.381] & $-$0.143 [$-$0.160, $-$0.127] & $-$1.322 [$-$1.392, $-$1.255] \\
 & 59 & \cellcolor{tableroot}0.107 [0.101, 0.112] & $-$0.225 [$-$0.241, $-$0.208] & $-$0.072 [$-$0.081, $-$0.064] & $-$0.646 [$-$0.679, $-$0.614] \\
Qwen3-4B & 8 & \cellcolor{tableroot}0.295 [0.287, 0.302] & $-$0.279 [$-$0.294, $-$0.265] & $-$0.041 [$-$0.047, $-$0.034] & $-$0.899 [$-$0.917, $-$0.881] \\
 & 59 & \cellcolor{tableroot}0.158 [0.154, 0.163] & $-$0.176 [$-$0.184, $-$0.167] & $-$0.021 [$-$0.026, $-$0.017] & $-$0.507 [$-$0.521, $-$0.494] \\
Qwen3-8B & 8 & \cellcolor{tableroot}0.313 [0.306, 0.320] & $-$0.199 [$-$0.212, $-$0.186] & $-$0.017 [$-$0.024, $-$0.011] & $-$0.872 [$-$0.893, $-$0.851] \\
 & 59 & \cellcolor{tableroot}0.160 [0.155, 0.164] & $-$0.130 [$-$0.140, $-$0.121] & $-$0.006 [$-$0.010, $-$0.002] & $-$0.609 [$-$0.624, $-$0.593] \\
Qwen2.5-7B & 8 & \cellcolor{tableroot}0.239 [0.230, 0.248] & $-$0.345 [$-$0.365, $-$0.325] & $-$0.096 [$-$0.107, $-$0.085] & $-$1.011 [$-$1.048, $-$0.975] \\
 & 59 & \cellcolor{tableroot}0.120 [0.115, 0.124] & $-$0.216 [$-$0.229, $-$0.203] & $-$0.052 [$-$0.058, $-$0.046] & $-$0.509 [$-$0.529, $-$0.488] \\
Qwen2.5-14B & 8 & \cellcolor{tableroot}0.226 [0.218, 0.234] & $-$0.373 [$-$0.394, $-$0.352] & $-$0.111 [$-$0.124, $-$0.099] & $-$1.087 [$-$1.122, $-$1.052] \\
 & 59 & \cellcolor{tableroot}0.115 [0.111, 0.120] & $-$0.257 [$-$0.271, $-$0.243] & $-$0.065 [$-$0.072, $-$0.058] & $-$0.611 [$-$0.632, $-$0.592] \\
Qwen2.5-32B & 8 & \cellcolor{tableroot}0.217 [0.209, 0.224] & $-$0.373 [$-$0.393, $-$0.352] & $-$0.107 [$-$0.119, $-$0.095] & $-$1.132 [$-$1.165, $-$1.097] \\
 & 59 & \cellcolor{tableroot}0.111 [0.108, 0.115] & $-$0.235 [$-$0.247, $-$0.223] & $-$0.051 [$-$0.057, $-$0.046] & $-$0.605 [$-$0.625, $-$0.584] \\
\bottomrule
\end{tabular*}
\caption{Gains over draft selection in expected accepted draft tokens, with $95\%$ intervals, when the softmax probabilities of the exact root row or of a row at another address replace the first draft factor. Rounds without one keep the draft ranking.}
\label{tab:approx-ci}
\end{table*}

\begin{table*}[t]
\centering\small
\begin{tabular*}{\textwidth}{@{\extracolsep{\fill}}lrrrrrrrr@{}}
\toprule
& \multicolumn{2}{c}{Available, \%} & \multicolumn{3}{c}{Mean TV} & \multicolumn{3}{c}{Argmax agree, \%} \\
\cmidrule(lr){2-3}\cmidrule(lr){4-6}\cmidrule(lr){7-9}
Target & Cache & Same token & Cache & Same token & Preceding & Cache & Same token & Preceding \\
\midrule
\rowcolor{tablegroup}\multicolumn{9}{c}{Greedy decoding} \\
Llama-3.1-8B & 76 & 40 & 0.46 & 0.36 & 0.94 & 54 & 64 & 6 \\
Qwen3-4B & 71 & 27 & 0.51 & 0.36 & 0.95 & 49 & 64 & 5 \\
Qwen3-8B & 71 & 29 & 0.52 & 0.35 & 0.94 & 49 & 65 & 6 \\
Qwen2.5-7B & 74 & 31 & 0.50 & 0.43 & 0.94 & 51 & 57 & 5 \\
Qwen2.5-14B & 72 & 30 & 0.49 & 0.39 & 0.94 & 51 & 60 & 5 \\
Qwen2.5-32B & 72 & 30 & 0.49 & 0.40 & 0.94 & 51 & 59 & 5 \\
\rowcolor{tablegroup}\multicolumn{9}{c}{Sampling at temperature $1$} \\
Llama-3.1-8B & 71 & 35 & 0.47 & 0.36 & 0.92 & 53 & 64 & 7 \\
Qwen3-4B & 70 & 26 & 0.52 & 0.36 & 0.95 & 49 & 64 & 5 \\
Qwen3-8B & 69 & 28 & 0.52 & 0.36 & 0.94 & 48 & 64 & 6 \\
Qwen2.5-7B & 69 & 28 & 0.52 & 0.43 & 0.93 & 48 & 57 & 6 \\
Qwen2.5-14B & 68 & 28 & 0.50 & 0.40 & 0.94 & 49 & 59 & 6 \\
Qwen2.5-32B & 68 & 27 & 0.50 & 0.39 & 0.94 & 49 & 60 & 6 \\
\bottomrule
\end{tabular*}
\caption{Agreement of rows at other addresses with the next round's root row. Availability is the share of roots with such a row, and agreement is computed over those roots. The preceding row is always available. Total variation uses softmax probabilities.}
\label{tab:approx-rows}
\end{table*}

\begin{figure*}[t]
\centering
\includegraphics[width=\textwidth]{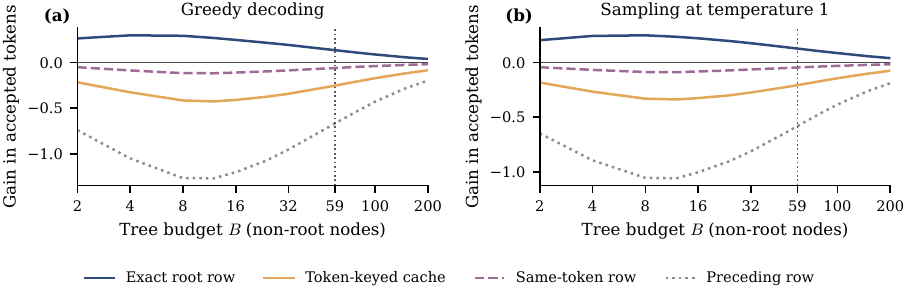}
\caption{Gain over draft selection, averaged over six targets, when the softmax probabilities of the exact root row or of a row at another address replace the first draft factor. The dotted line marks EAGLE-3's tree.}
\label{fig:approx}
\end{figure*}

\subsection{Calibrated and reranked draft scores}
\label{app:calib}
Table~\ref{tab:calib} compares two calibrated controls and two rerankers with draft and root selection. The first control replaces $q_1$ with an isotonic estimate of the target probability; the second calibrates every draft factor with a separate map for each depth. Each map is fitted over $200$ equal-count bins of draft probability on the even-numbered prompts and applied to the odd-numbered ones, and vice versa.

The reranker uses gradient-boosted trees fitted on one half of the prompts and applied to the other to predict the target probability of each first-level candidate, and under greedy decoding whether it is the target's token. Its inputs, all known before verification, are the draft probabilities of the candidate and its siblings, the draft distribution and pool subtree below the candidate, a token prior estimated on the training half, the task, the prompt length, the round's position, the previous round's acceptance and whether the candidate repeats the root's token.

The token prior averages the target over candidates with the same token and shrinks the average toward the overall mean with a pseudo-count of $20$. Within the training half, the candidates are split in alternation, and each one's prior comes from the other part, so no prior contains its own label. The model is a histogram gradient-boosted classifier under greedy decoding and a regressor under sampling, with $400$ iterations, learning rate $0.05$ and at most $63$ leaves. The prediction replaces the first draft factor as the root row does.

Its top-ranked first-level candidate is the target's most probable one in $60.1\%$ to $82.8\%$ of rounds, against $58.7\%$ to $82.7\%$ for the draft ranking, where greedy runs count only rounds whose first level contains the target's token.

The row features come from the token-keyed cache, a same-token row from the previous tree and the row that generated the continuation. For each available row, the reranker reads the candidate's log probability and its difference from the highest first-level log probability. It also reads the number of rounds since the cache row was stored and the number of previous-tree prefixes that end in the root's token. Each of these inputs is missing in rounds without its row, and the preceding row's entry for the continuation token is always missing.

This reranker recovers at most $29\%$ of the root gain at $B=8$ and $33\%$ at $B=59$, and in every run its held-out squared error on the first-level targets is below that of the draft probabilities and of the reranker without rows. In every run at both budgets, the $95\%$ intervals of its gain over draft selection and of its gain over the reranker without rows lie above zero. Their lower ends are at least $0.27\%$ and $0.30\%$ of draft selection's value at $B=59$, and $0.81\%$ and $0.88\%$ at $B=8$.

\subsection{Coverage from the scored frontier}
\label{app:frontier}
Scoring the $M$ highest-ranked candidates includes candidates whose parents lie outside the tree. Such a candidate cannot be the emitted address after exhaustive traversal. Table~\ref{tab:frontier} instead scores only frontier candidates, which lie outside the tree but have their parents in it, in draft order. Forty frontier candidates cover $12\%$ to $17\%$ of next roots, more than the $41$ candidates added by the top-$100$ set, and the complete frontier, $380$ to $418$ candidates on average, covers $28\%$ to $35\%$, the same share as the whole pool.

\subsection{Selection with reused rows}
\label{app:reuse}
Without a separate root pass, a decoder selects with the exact root row only after a covered round and keeps the draft ranking otherwise, including in the first round of each generation. Each round verifies the tree of the rule it uses and scores the $M$ highest-ranked candidates in that rule's order, so its coverage is that of its own tree. Under greedy decoding a round is covered when the greedy token after its accepted path labels a scored candidate. At temperature $1$ a round holds the row with the probability that the previous round was covered, and that round's coverage is the scored frontier mass of Equation~\ref{eq:frontier}. For EAGLE-3's tree this computation gives the recorded coverage in every greedy round.

At the recorded contexts, root selection in rounds that hold the row raises expected acceptance over draft ranking at $B=59$ by $0.5\%$ to $0.9\%$ when verification scores the $100$ highest-ranked candidates and by $1.1\%$ to $2.3\%$ when it scores the whole pool, and at $B=8$ by $4.9\%$ to $10.1\%$ and $5.2\%$ to $11.1\%$, with every interval above zero.

With the $100$ candidates, $41\%$ to $56\%$ of rounds hold the row at $B=8$, against $10\%$ to $15\%$ at $B=59$, since the smaller trees exit closer to the root and leave $92$ of the $100$ scored candidates outside the tree. Scoring the whole pool, $710$ prefixes, gives the largest coverage that any scored subset of the pool reaches. By Theorem~\ref{thm:run}, whatever the gain in rounds that hold the row, a decoder that is never covered and scores no more prefixes in each round has emitted at least as many tokens as such a decoder after every round.

\begin{table*}[t]
\centering\small
\begin{tabular*}{\textwidth}{@{\extracolsep{\fill}}lrrrrrr@{}}
\toprule
& & \multicolumn{2}{c}{Tokens for each pass} & & & \\
\cmidrule(lr){3-4}
Target & $B$ & EAGLE-3 & Covering & Change, \% & Covered, \% & Gain \\
\midrule
\rowcolor{tablegroup}\multicolumn{7}{c}{Greedy decoding} \\
Llama-3.1-8B & 8 & 4.171 & 3.859 & $-$7.5 [$-$8.2, $-$6.8] & 49 & 0.62 \\
 & 59 & 5.505 & 5.059 & $-$8.1 [$-$8.6, $-$7.6] & 52 & 0.25 \\
Qwen3-4B & 8 & 2.654 & 2.476 & $-$6.7 [$-$7.1, $-$6.3] & 31 & 0.49 \\
 & 59 & 3.323 & 3.106 & $-$6.5 [$-$6.9, $-$6.2] & 29 & 0.28 \\
Qwen3-8B & 8 & 2.569 & 2.395 & $-$6.8 [$-$7.2, $-$6.4] & 33 & 0.57 \\
 & 59 & 3.231 & 2.998 & $-$7.2 [$-$7.6, $-$6.8] & 31 & 0.34 \\
Qwen2.5-7B & 8 & 3.368 & 3.151 & $-$6.4 [$-$7.1, $-$5.8] & 35 & 0.61 \\
 & 59 & 4.302 & 3.987 & $-$7.3 [$-$8.1, $-$6.6] & 36 & 0.32 \\
Qwen2.5-14B & 8 & 3.446 & 3.229 & $-$6.3 [$-$6.8, $-$5.8] & 36 & 0.61 \\
 & 59 & 4.430 & 4.138 & $-$6.6 [$-$7.1, $-$6.0] & 38 & 0.29 \\
Qwen2.5-32B & 8 & 3.470 & 3.248 & $-$6.4 [$-$6.8, $-$6.0] & 36 & 0.59 \\
 & 59 & 4.446 & 4.152 & $-$6.6 [$-$7.1, $-$6.0] & 37 & 0.30 \\
\rowcolor{tablegroup}\multicolumn{7}{c}{Sampling at temperature $1$} \\
Llama-3.1-8B & 8 & 3.476 & 3.198 & $-$8.0 [$-$10.2, $-$5.8] & 42 & 0.67 \\
 & 59 & 4.493 & 4.124 & $-$8.2 [$-$11.2, $-$5.1] & 45 & 0.32 \\
Qwen3-4B & 8 & 2.594 & 2.419 & $-$6.8 [$-$7.3, $-$6.3] & 31 & 0.50 \\
 & 59 & 3.241 & 3.018 & $-$6.9 [$-$7.5, $-$6.3] & 29 & 0.30 \\
Qwen3-8B & 8 & 2.477 & 2.298 & $-$7.2 [$-$7.9, $-$6.5] & 32 & 0.59 \\
 & 59 & 3.104 & 2.869 & $-$7.5 [$-$8.3, $-$6.8] & 30 & 0.37 \\
Qwen2.5-7B & 8 & 2.962 & 2.803 & $-$5.4 [$-$6.6, $-$4.0] & 30 & 0.64 \\
 & 59 & 3.754 & 3.551 & $-$5.4 [$-$6.8, $-$3.8] & 32 & 0.36 \\
Qwen2.5-14B & 8 & 3.038 & 2.835 & $-$6.7 [$-$7.7, $-$5.6] & 32 & 0.65 \\
 & 59 & 3.866 & 3.630 & $-$6.1 [$-$7.4, $-$4.9] & 34 & 0.35 \\
Qwen2.5-32B & 8 & 3.031 & 2.866 & $-$5.5 [$-$6.6, $-$4.3] & 31 & 0.64 \\
 & 59 & 3.898 & 3.651 & $-$6.3 [$-$7.8, $-$4.8] & 34 & 0.36 \\
\bottomrule
\end{tabular*}
\caption{Covering decoder against EAGLE-3's decoder, in tokens emitted for each target pass. Change is the relative change with $95\%$ intervals, Covered the share of covered rounds, and Gain root-draw selection's token gain over draft selection after covered rounds.}
\label{tab:cover}
\end{table*}

We also run a covering decoder online, at its own contexts. It drafts EAGLE-3's pool and verifies its tree by exhaustive target sampling, except that a draw labeling a leaf of the verified tree, including a tree that holds only the drawn token, is emitted as the continuation instead of being accepted. The leaf's row from the same pass is then the next root row, from which the next round draws its first token before it selects its tree by root-draw selection, and other rounds keep EAGLE-3's tree. EAGLE-3's decoder runs on the same prompts with the same output limit.

Both run one target pass in each round, so Table~\ref{tab:cover} compares the tokens they emit for each target pass, counted up to the first end-of-sequence token. In the rounds that hold a covered row, the pass also scores the trees that draft selection and root selection would keep, for measurement only. The covering decoder emits fewer tokens for each target pass at both budgets, although selection improves in the rounds that hold the root row.

\subsection{Token and address matches}
\label{app:matches}
Among the retained rows of Appendix~\ref{app:setup}, the row at the emitted address is a matched row and the others are token-only rows. Table~\ref{tab:rowvalidity} gives the agreement of matched and token-only rows with the next round's root row in each run, and Table~\ref{tab:addr} the token and address match rates behind Section~\ref{sec:rows}. Matched pairs have median total variation below $0.0004$ in every run. Among token-only rows, those on the accepted path have mean total variation of $0.72$ to $0.79$ from the next root row, above that of the other token-only rows in every run, as they condition on an earlier instance of the continuation token.

\subsection{Rows at other addresses}
\label{app:approx}
Table~\ref{tab:approx-ci} reports direct first-factor substitution at both budgets, and Figure~\ref{fig:approx} varies the budget. All three substitutes lower acceptance relative to draft selection in these tests. Under greedy decoding, this comparison uses softmax probabilities even for the Exact column, whereas root selection uses the point-mass target law. Table~\ref{tab:approx-rows} reports availability and distribution agreement. The replay and row construction are described in Appendix~\ref{app:setup}.

The cache and preceding rows can also be mixed geometrically with the first draft factor, choosing the weight from $0.25$, $0.5$ and $0.75$ on the other half of the prompts. The preceding-row mixture still lowers acceptance, while the cache mixture yields no consistently positive gain.

\subsection{Standalone drafters}
\label{app:lm}
Llama-3.1-8B and Qwen3-8B also decode all $644$ prompts at both temperatures with a standalone drafter that shares the target's tokenizer, Llama-3.2-1B-Instruct \citep{meta2024llama32} for Llama-3.1-8B and Qwen3-0.6B \citep{qwen3technical2025} for Qwen3-8B. The drafter runs in bfloat16 with scaled dot-product attention instead of eager attention. It constructs the candidate pool in the same way as the EAGLE-3 drafter. Target settings, verification, scoring, prompts, output limit and seeding follow the protocol in Appendix~\ref{app:setup}. The selection rules of Section~\ref{sec:setup} therefore apply unchanged, and Table~\ref{tab:lmdraft} gives their values.

The relative root gain grows as the budget shrinks in every run, and the interval of the root gain lies above zero at each budget from $2$ to $200$. At $B=59$, the root row recovers $35\%$ to $57\%$ of the oracle's advantage, and the PEARL-style decoder exceeds the variant whose drafter commits to its most probable first token. Table~\ref{tab:lmdraft} therefore compares root-draw selection with the stronger baseline.

Against the PEARL-style decoder enlarged as in Appendix~\ref{app:selci} until the mean tree sizes are equal in each run, root-draw selection still gains $0.5\%$ to $1.3\%$ at $B=59$ and $4.2\%$ to $5.2\%$ at $B=8$, with every interval above zero.

\begin{table*}[t]
\centering\small
\begin{tabular*}{\textwidth}{@{\extracolsep{\fill}}llrrr@{}}
\toprule
& & \multicolumn{2}{c}{Over draft selection} & Over PEARL-style \\
\cmidrule(lr){3-4}\cmidrule(lr){5-5}
Target & Drafter & Root draw, \% & PEARL-style, \% & Root draw, \% \\
\midrule
\rowcolor{tablegroup}\multicolumn{5}{c}{Greedy decoding} \\
Llama-3.1-8B & Llama-3.2-1B & 12.0 [11.3, 12.7] & 2.1 [1.6, 2.7] & 9.6 [9.0, 10.3] \\
Qwen3-8B & Qwen3-0.6B & 17.4 [16.6, 18.3] & 4.9 [4.3, 5.5] & 12.0 [11.3, 12.6] \\
\rowcolor{tablegroup}\multicolumn{5}{c}{Sampling at temperature 1} \\
Llama-3.1-8B & Llama-3.2-1B & 19.3 [17.6, 21.1] & 7.3 [5.9, 8.8] & 11.2 [10.0, 12.3] \\
Qwen3-8B & Qwen3-0.6B & 19.1 [18.3, 20.0] & 6.7 [5.9, 7.4] & 11.7 [11.0, 12.4] \\
\bottomrule
\end{tabular*}
\caption{Gain in decoding speed at $B=8$ with each standalone drafter in its own process on a second GPU, over draft selection and over the PEARL-style decoder, with $95\%$ prompt-bootstrap intervals in brackets.}
\label{tab:pipe}
\end{table*}

\begin{table*}[t]
\centering\small
\begin{tabular*}{\textwidth}{@{\extracolsep{\fill}}lrrrrrrr@{}}
\toprule
& & \multicolumn{2}{c}{Nodes} & \multicolumn{2}{c}{Tree pass, ms} & & \\
\cmidrule(lr){3-4}\cmidrule(lr){5-6}
Target & Cap & Root draw & PEARL-style & Root draw & PEARL-style & Expected, \% & Speed, \% \\
\midrule
\rowcolor{tablegroup}\multicolumn{8}{c}{Greedy decoding} \\
Llama-3.1-8B & 43 & 42.0 & 41.6 & 32.9 & 33.1 & 1.1 [1.0, 1.3] & 1.3 [0.9, 1.7] \\
Qwen3-8B & 41 & 38.4 & 37.7 & 34.9 & 34.5 & 1.9 [1.7, 2.2] & 1.7 [1.2, 2.2] \\
\rowcolor{tablegroup}\multicolumn{8}{c}{Sampling at temperature 1} \\
Llama-3.1-8B & 40 & 36.1 & 35.7 & 31.8 & 32.1 & 2.1 [1.9, 2.2] & 2.2 [1.5, 3.0] \\
Qwen3-8B & 40 & 36.9 & 36.0 & 34.5 & 34.2 & 2.5 [2.3, 2.7] & 2.1 [1.6, 2.5] \\
\bottomrule
\end{tabular*}
\caption{Root-draw selection at its lowered cap against the PEARL-style decoder at $B=59$. Nodes and Tree pass are means over rounds. Expected is its gain in expected acceptance on the recorded rounds, Speed its gain in decoding speed, with $95\%$ intervals.}
\label{tab:pipe-matched}
\end{table*}

\subsection{Pipelined decoding}
\label{app:pipe}
Llama-3.1-8B and Qwen3-8B decode all $644$ prompts again with the standalone drafters of Appendix~\ref{app:lm}, now with the drafter in its own process on a second RTX A6000 GPU, as in PEARL's deployment \citep{liu2024pearl}. Target and drafter run at batch size one with a static key-value cache, and every call after the prompt prefill replays a captured CUDA graph. In each round, the target process sends the newly committed tokens to the drafter, which grows its pool and returns it. The drafter waits while the target verifies, so no decoder uses PEARL's post-verification, which drafts during verification.

Draft selection waits for the pool and verifies the continuation token and its $B$-node tree in one target pass. The PEARL-style decoder, root-draw selection and root selection run a one-token target pass on the continuation token while the drafter works. They draw the first token from the returned root row and then verify their trees as described in Appendix~\ref{app:online}. Root selection ranks the pool with this row and verifies its whole tree. When the drawn token is not on the tree's first level, which happens in $1.2\%$ to $8.4\%$ of its rounds at $B=59$, the drawn token becomes the round's continuation and the second pass is skipped.

The settings are $T\in\{0,1\}$ and $B\in\{8,59\}$, and a decode stops at its first end-of-sequence token or after the round in which its output reaches the default limit. After warm-up decodes on two prompts, each prompt runs all sixteen combinations of decoder, $T$ and $B$ once, in an order rotated by its position in a seeded prompt order, so resampling prompts also resamples the timing noise of single decodes.

A decode's speed counts the tokens emitted up to the first end-of-sequence token over the time from the start of the target prefill to the last commit, so it includes the prefill, and a run's speed divides the summed tokens by the summed time. Gains compare the speeds of two decoders on the same prompts, with $95\%$ percentile intervals from $2{,}000$ paired bootstrap resamples of prompts. All prompts are retained in the timing comparison.

Drafting takes a median of $50$ to $51$ ms a round and the one-token target pass $26$ to $28$ ms, so that pass ends while the drafter works, and the median round of root-draw or root selection lies within $0.4$ ms of draft selection's. At $B=59$, root selection decodes $3.2\%$ to $6.2\%$ faster than draft selection, root-draw selection $3.5\%$ to $9.6\%$ faster and the PEARL-style decoder $4.3\%$ to $8.3\%$ faster, and no lower end of their intervals falls below $2.7\%$. At $B=8$, root selection decodes $9.6\%$ to $17.3\%$ faster than draft selection, and Table~\ref{tab:pipe} gives the gains of root-draw selection and the PEARL-style decoder, whose intervals also exclude zero.

In these runs, root-draw selection's trees hold $16$ to $18$ more nodes on average than the PEARL-style decoder's at $B=59$ and $1.4$ to $2.0$ more at $B=8$. Enlarging the PEARL-style tree at $B=59$ to equal mean size would leave more than $64$ tokens to verify after pruning, beyond the largest captured CUDA graph, in $45\%$ to $51\%$ of rounds. To time the two decoders at matched tree sizes, we lower root-draw selection's cap instead, in each run to the smallest value at which its mean expected tree size on the recorded rounds of Appendix~\ref{app:lm} still reaches the PEARL-style decoder's at $B=59$, between $40$ and $43$ nodes.

Both decoders then decode all $644$ prompts again in the same deployment, interleaved as before, with two sampling seeds at $T=1$ and all prompts retained. In Table~\ref{tab:pipe-matched}, root-draw selection verifies at least as many nodes on average as the PEARL-style decoder and decodes $1.3\%$ to $2.2\%$ faster. Its expected acceptance at these caps on the recorded rounds is $1.1\%$ to $2.5\%$ higher. These offline gains exceed the equal-size gains of Appendix~\ref{app:lm}, which match the two decoders at root-draw selection's larger trees.

\end{document}